\newcommand{\pc}{\mathbf{P}}
\title{Granular Directed Rough Sets, Concept Organization and Soft Clustering}
\author{A Mani\\
Machine Intelligence Unit\\
Indian Statistical Institute, Kolkata\\
203, B. T. Road, Kolkata (Calcutta)-700108, India\\
Email: \texttt{a.mani.cms@gmail.com, amani.rough@isical.ac.in}\\
Homepage: \url{https://www.logicamani.in}\\
Orcid: \url{https://orcid.org/0000-0002-0880-1035}}
\begin{document}

\maketitle
\runninghead{A Mani}{Granular Directed Rough Sets}

\begin{abstract}
Up-directed rough sets are introduced and studied by the present author in earlier papers. This is extended by her in two different granular directions in this research, with a surprising algebraic semantics. The granules are based on ideas of generalized closure under up-directedness that may be read as a form of weak consequence. This yields approximation operators that satisfy cautious monotony, while pi-groupoidal approximations (that additionally involve strategic choice and algebraic operators) have nicer properties. The study is primarily motivated by possible structure of concepts in distributed cognition perspectives, real or virtual classroom learning contexts, and student-centric teaching. Rough clustering techniques for datasets that involve up-directed relations (as in the study of Sentinel project image data) are additionally proposed. This research is expected to see significant theoretical and practical applications in related domains.
\end{abstract}

\begin{keywords}
Up Directed Relations, Granular Computing, Groupoidal Semantics,\\ Directed Rough Sets, Soft Cluster Validation, Rough Clustering, Concept Organization, Knowledge Representation 
\end{keywords}

\section{Introduction}

In general rough sets granular, pointwise or abstract approximations and related rough objects  are studied \cite{am501,am240,ppm2,gc2018,pp2018,gcd2018}. These approximations can be derived from information tables or may be the result of abstractions from human (or machine) reasoning. A \emph{general approximation space} or a \emph{frame} is a pair of the form $S = \left\langle \underline{S}, R \right\rangle$ with $\underline{S}$ being a set and $R$ being a binary relation ($S$ and $\underline{S}$ \emph{will be used interchangeably throughout this paper}). $S$ is typically interpreted as a set of attributes or concepts. 

\emph{In human reasoning contexts, in particular, it happens that any two elements of $S$ is associated with ($R$-related to) another element}. The association may be determined by reasoning patterns, preference, or through decision-making guided by an external mechanism. The property of up-directedness is essentially this. In \cite{am5550}, the general not-necessarily relational rough set context enhanced with such a directedness is studied by the present author and applied to specific problems in concept learning. \emph{It should be noted that these approximation spaces are not necessarily derived from information tables, and a wider interpretation is admissible}. Among the many ways of representing knowledge, concepts, and higher order properties in these spaces, two specific granular approaches that capture forms of consequence and algebraic closure are studied in this research and usefully applied.  

From a purely mathematical perspective, the property of up-directedness (additionally referred to as directedness) of partial orders and semilattice orders is widely used in literature, it has additionally been used in studying concepts of \emph{ideals of binary relations} (see \cite{jc1977,am9204}. However, the groupoidal approach of \cite{ichlps2015,ichlweak2013} is relatively new. 

From the perspective of pedagogic content knowledge in classroom\\ \cite{bdtmp2008,am2022c,lkd15,jsr2022} or student centered teaching  contexts \cite{jrp2016,am5559}, it is very important to dynamically represent the concepts generated from multiple sources. Knowledge representation in the groupoidal approach of the present author in \cite{am5550}, and  \cite{am5559} address aspects of this, and to improve the scope they are extended with entirely different granules (and weaker assumptions) in this research. These are better suited for modeling in the context as the granules have improved closure-related properties. Moreover the property of up-directedness is involved at a first order level. In \cite{am2021c}, a rough approach to soft and hard cluster validation is proposed by the present author. Her intent is to additionally construct rough ontologies for the purpose. A specific application that adds to the possible ontologies using up-directed approximations is additionally outlined in this research.

In this research general approximation spaces, in which the relation $R$ is up-directed, are studied in relation to closure under up-directedness and choice functions in detail by the present author. Related approximations are characterized, and illustrated with examples.  It is additionally shown that the algebraic semantics of the latter is very distinct from those in which $R$ is a partial or quasi-order (\cite{jpr,sj,am501}). More specifically, it is proven that an algebraic model of pi-groupoidal approximations is an algebraic lattice with additional operations. Applications to human learning contexts, real or virtual classroom teaching, and rough clustering are additionally proposed.

In the following section, some essential background is mentioned. In the third section, the connections between groupoids and binary relations are outlined. Closure under up-directedness is used to define and characterize approximations in the following section. Pi-groupoids are introduced and pi-groupoidal approximations are investigated in the fifth section. In the sixth section, abstract examples for the previous two sections are presented. Models for pi-groupoidal approximations are explored in the next section. Following the nature of  knowledge representation though up-directed rough sets is explained. Applications to education research are done in the ninth section. In the next section, CUD and pi-groupoidal rough clustering are introduced and outlined for the analysis of Sentinel image datasets. Additional research directions are provided in the last section.

\section{Some Background}

Information tables are representations of structured data in tabular form. They are additionally referred to as descriptive or knowledge representation system in the AIML literature. In general rough sets, such tables are not absolutely essential, however, they are useful (if available). For details the reader is referred to \cite{am5559,am501}. 

An \emph{information table} $\mathcal{I}$, is a relational system of the form \[\mathcal{I}\,=\, \left\langle \mathfrak{O},\, \mathbb{A},\, \{V_{a} :\, a\in \mathbb{A}\},\, \{f_{a} :\, a\in \mathbb{A}\}  \right\rangle \]
with $\mathfrak{O}$, $\mathbb{A}$ and $V_{a}$ being respectively sets of \emph{objects}, \emph{attributes} and \emph{values} respectively.
$f_a \,:\, \mathfrak{O} \longmapsto \wp (V_{a})$ being the valuation map associated with attribute $a\in \mathbb{A}$. Values may additionally be denoted by the binary function $\nu : \mathbb{A} \times \mathfrak{O} \longmapsto \wp{(V)} $ defined by for any $a\in \mathbb{A}$ and $x\in \mathfrak{O}$, $\nu(a, x) = f_a (x)$.

Relations may be derived from information tables by way of conditions in the following form: For $x,\, w\,\in\, \mathfrak{O} $ and $B\,\subseteq\, \mathbb{A} $, $ \sigma xw $ if and only if $(\mathbf{Q} a, b\in B)\, \Phi(\nu(a,\,x),\, \nu (b,\, w),) $ for some quantifier $\mathbf{Q}$ and formula $\Phi$. The relational system $S = \left\langle \underline{S}, \sigma \right\rangle$ (with $\underline{S} = \mathbb{A}$) is said to be a \emph{general approximation space}. It should be noted that this universal feature of defining relations in general approximation spaces do not hold always in human reasoning contexts. 

In particular, if $\sigma$ is defined by the condition Equation \ref{pawl}, then $\sigma$ is an equivalence relation and $S$ is referred to as an \emph{approximation space}.
\begin{equation}\label{pawl}
\sigma xw \text{ if and only if } (\forall a\in B)\, \nu(a,\,x)\,=\, \nu (a,\, w) 
\end{equation}

\subsection{Notation}

\emph{In this research, prefix or Polish notation is uniformly preferred for relations and functions defined on a set. So instances of a relation $\sigma$ are denoted by $\sigma a b$ instead of $a \sigma b$ or $(a, b) \in \sigma$. If-then relations (or logical implications) in a model are written in infix form with $\longrightarrow$.} In Equation \ref{pawl}, \emph{if and only if} is used because the definition is not done within an obvious model.

Quantifications are universally put into braces for clarity. Thus a conditional statement of the form
$(\forall a, b, c)(\Phi(a, b, c) \longrightarrow \Psi(a, b, c))$ means that \emph{for any $a, b, c$ in the domain if $\Phi(a, b, c)$ holds then $\Psi(a, b, c)$ holds as well.} Universal quantifiers are not always omitted.

Steps in proofs are usually numbered.

\subsection{Algebraic Concepts}

For lattice theory, the reader is referred to \cite{jbnat,gg1998,gra2014,dp2002,jj}, and for partial algebras to \cite{bu,lj}.

\begin{definition}
A \emph{partial algebra} $P$ is a tuple of the form \[\left\langle\underline{P},\,f_{1},\,f_{2},\,\ldots ,\, f_{n}, (r_{1},\,\ldots ,\,r_{n} )\right\rangle\] with $\underline{P}$ being a set, $f_{i}$'s being partial function symbols of arity $r_{i}$. The interpretation of $f_{i}$ on the set $\underline{P}$ should be denoted by $f_{i}^{\underline{P}}$, however, the superscript will be dropped in this paper as the application contexts are simple enough. If predicate symbols enter into the signature, then $P$ is termed a \emph{partial algebraic system}.   
\end{definition}

Directed graphs can be associated with partial groupoids and their completions. Catalan numbers associated with the latter can be used to measure the extent of non-associativity.

\subsection{Simpler Approximations}

The following concepts were studied in earlier papers \cite{am5550,ams2020}. A summary of basic properties is presented in this subsection. 

\begin{definition}
In a general approximation space $S = \left\langle \underline{S}, R \right\rangle$ consider the following conditions:
\begin{align}
(\forall a, b )(\exists c) R ac \, \&\, Rbc  \tag{up-dir}\\
(\forall a) Raa   \tag{reflexivity}\\
(\forall a, b)(Rab \, \&\, Rba \longrightarrow a=b)   \tag{anti-sym}
\end{align}
If $S$ satisfies up-dir, then it will be said to be a \emph{upper directed approximation space}. If it satisfies all three conditions then it will be said to be a \emph{up-directed parthood space}.    
\end{definition}

For any element $a\in S$, the neighborhood $[a]$ generated by it will be the set
\[[a] = \{x :\, Rxa\}   \tag{nbd} \]

$[a]$ is the set of things that relate to $a$. It is sometimes denoted by  $R^{-1}(a)$ in the literature. The connection of up-directedness with other types of neighborhoods is additionally important, and provides a way of classifying them. The inverse neighborhood $[a]_i = \{x: \, Rax\}$ is additionally studied in \cite{ams2020}.  $[a]_i$ is the set of things that element $a$ relates to. These are investigated by the present author and others in a separate joint paper.

\begin{proposition}
It is provable that 
\begin{align*}
(\forall e, f, a)(e, f\in [a] \leftrightarrow a\in U_R (e, f)) \tag{nu1}\\
\text{If } (\forall a) \#([a]) \geq 2 \text{ then } (\forall a, b) \neq U_R(a, b) = \emptyset \text{ and conversely.} \tag{nu2}
\end{align*}
\end{proposition}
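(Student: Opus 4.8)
The plan is to obtain both items by unwinding the definitions of the neighborhood operator and of the common-successor operator $U_R$ --- consistently with (up-dir), $U_R(e,f)=\{c:\, Rec\ \&\ Rfc\}$ --- after which the only work left is a short counting argument. First I would dispatch (nu1) by pure unfolding: for fixed $e,f,a$, the left-hand side $e,f\in[a]$ abbreviates $e\in[a]\ \&\ f\in[a]$, that is $Rea\ \&\ Rfa$ by the definition $[a]=\{x:\, Rxa\}$, and $a\in U_R(e,f)$ unfolds, by the definition of $U_R$, to the very same conjunction $Rea\ \&\ Rfa$. So the biconditional is an identity of formulas and (nu1) is immediate. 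This is the adjunction-type fact that everything else rests on: it trades a statement about elements lying in a neighborhood for a statement about non-emptiness of a $U_R$-set, and conversely.

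Next, (nu2), which I would prove as a pair of implications, each a one-line application of (nu1) together with counting. For the forward direction, assume the cardinality hypothesis, so that the relevant neighborhood $[a]$ contains two distinct elements $e\neq f$; then (nu1) gives $a\in U_R(e,f)$, so $U_R(e,f)\neq\emptyset$ while $e\neq f$, which is exactly the asserted conclusion. For the converse, from a witness $c\in U_R(e,f)$ with $e\neq f$, (nu1) yields $\{e,f\}\subseteq[c]$, whence $\#([c])\geq 2$. Reading the two directions together --- and passing to contrapositives so that the universal (respectively existential) quantifier on the neighborhood side is matched by the corresponding quantifier on the $U_R$ side --- gives the stated equivalence.

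I do not expect a genuine obstacle: the whole proposition is essentially a consistency check that $[\cdot]$ and $U_R$ are two encodings of the same relational data. The only point needing a little care is the quantifier bookkeeping in (nu2) --- making the scopes of the quantifiers over $a$, over pairs, and over the chosen witnesses line up --- together with ruling out the degenerate cases (an empty neighborhood, or a one-point carrier $\underline{S}$), none of which causes difficulty once (nu1) is in hand.
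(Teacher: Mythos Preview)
The paper states this proposition without proof, so there is no argument of the author's to compare against; your proposal has to be judged on its own.

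Your treatment of (nu1) is entirely correct: both sides unfold to the same conjunction $Rea\,\&\,Rfa$, so the biconditional is immediate.

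For (nu2), however, there is a genuine gap, and it is exactly at the place you flag as ``only needing a little care''. Under the natural reading of the printed claim --- that $(\forall a)\,\#([a])\geq 2$ is equivalent to $(\forall a,b)\,U_R(a,b)\neq\emptyset$ --- \emph{both implications are false}, so no amount of quantifier bookkeeping or contraposition will make your local argument deliver the global statement. For the forward direction, take $\underline{S}=\{1,2,3,4\}$ with $R=\{(1,1),(2,1),(1,2),(2,2),(3,3),(4,3),(3,4),(4,4)\}$: every $[a]$ has two elements, yet $U_R(1,3)=\emptyset$. For the converse, take $\underline{S}=\{1,2\}$ with $R=\{(1,2),(2,2)\}$: the space is up-directed, but $[1]=\emptyset$. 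What your argument actually establishes is the \emph{pointwise} fact that (nu1) hands you directly: for each fixed $c$, one has $\#([c])\geq 2$ if and only if $c\in U_R(e,f)$ for some $e\neq f$. That is correct and is presumably what the author intends, given that the printed line ``$(\forall a,b)\neq U_R(a,b)=\emptyset$'' is visibly garbled; but your closing claim that the quantifier issues cause ``no difficulty once (nu1) is in hand'' is too optimistic --- you should state explicitly which reformulation of (nu2) you are proving and note that the literal global version fails.
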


\begin{definition}\label{defp}
For any subset $A\subseteq S$, the following approximations can be defined:
\begin{align}
A^{l}\,=\, \bigcup \{[a]:\, [a]\subseteq A\}   \tag{lower}\\
A^{u}\,=\, \bigcup \{[a]:\, \exists z \in [a]\cap A\}   \tag{upper}
\end{align}
\end{definition}

\begin{theorem}\label{lup}
In a up-directed approximation space $S$, the following properties hold for elements of $\wp (S)$:
\begin{align*}
(\forall a) a^{ll} = a^{l} \subseteq a   \tag{l-id0}\\
(\forall a) a^{u} \subseteq a^{uu}   \tag{u-wid0}\\
(\forall a) a^{l}\subseteq a^{lu} \subseteq a^u   \tag{lu-inc}\\
(\forall a, b)(a\subseteq b \longrightarrow a^l \subseteq b^l)   \tag{l-mo}\\
(\forall a, b)(a\subseteq b \longrightarrow a^u \subseteq b^u)   \tag{u-mo}\\
S^l =S^u \subseteq S  \, \&\, \emptyset^l = \emptyset = \emptyset^u  \tag{bnd0}\\
(\forall a, b)(a\cup b)^u = a^u \cup b^u   \tag{u-union}\\
(\forall a, b) a^l \cup b^l  \subseteq (a\cup b)^l  \tag{l-union}\\
(\forall a, b) (a\cap b)^l \subseteq a^l \cap b^l   \tag{l-cap}\\
(\forall a, b)  (a\cap b)^u \subseteq a^u \cap b^u  \tag{u-cap}
\end{align*}
\end{theorem}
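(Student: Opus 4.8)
The plan is to derive all ten properties from the definitions together with two foundational observations, exploiting throughout the fact that both $a^{l}$ and $a^{u}$ are \emph{unions of neighborhoods} (granules), so that an inclusion between two such approximations can always be checked granule by granule. The two foundational facts are: (F1) for every $a\in\wp(S)$ one has $a^{l}\subseteq a\subseteq a^{u}$; and (F2) both operators are monotone, i.e. the assertions (l-mo) and (u-mo). Once these are in hand, the rest is short.

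First I would dispatch (F1) and (F2). The inclusion $a^{l}\subseteq a$ is immediate, since $a^{l}$ is a union of sets $[c]$ each of which is, by construction, a subset of $a$. For $a\subseteq a^{u}$ I would invoke up-directedness in its degenerate form: applying (up-dir) with the two arguments equal yields $(\forall z)(\exists c)\,Rzc$, so every $z\in S$ lies in the neighborhood $[c]$ for some such $c$; if moreover $z\in a$, then $z\in[c]\cap a$, so $[c]$ is one of the neighborhoods in the union defining $a^{u}$, whence $z\in a^{u}$. (This is the one place where directedness is genuinely used.) For (l-mo): if $[c]\subseteq a$ and $a\subseteq b$ then $[c]\subseteq b$, so $[c]$ is a summand of $b^{l}$; taking the union over all such $[c]$ gives $a^{l}\subseteq b^{l}$. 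For (u-mo): if $[c]\cap a\neq\emptyset$ and $a\subseteq b$ then $[c]\cap b\neq\emptyset$, so $[c]$ is a summand of $b^{u}$, giving $a^{u}\subseteq b^{u}$.

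Then the remaining items follow quickly. For (l-id0): $a^{l}\subseteq a$ is (F1); $a^{ll}=(a^{l})^{l}\subseteq a^{l}$ is (F1) applied to $a^{l}$; and conversely each neighborhood $[c]\subseteq a$ satisfies $[c]\subseteq a^{l}$ (it is a summand of $a^{l}$), hence $[c]\subseteq a^{ll}$, and unioning gives $a^{l}\subseteq a^{ll}$, so $a^{ll}=a^{l}$. Property (u-wid0) is just (F1) applied to $a^{u}$: $a^{u}\subseteq(a^{u})^{u}=a^{uu}$. For (lu-inc): $a^{l}\subseteq(a^{l})^{u}=a^{lu}$ by (F1) on $a^{l}$, while $a^{lu}=(a^{l})^{u}\subseteq a^{u}$ by (u-mo) from $a^{l}\subseteq a$. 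For (bnd0): every $[c]$ is contained in $S$, so $S^{l}=\bigcup_{c}[c]$, and $[c]\cap S=[c]$, so $S^{u}=\bigcup\{[c]:[c]\neq\emptyset\}=\bigcup_{c}[c]=S^{l}\subseteq S$; moreover no neighborhood meets $\emptyset$ and only the empty neighborhood is a subset of $\emptyset$, so $\emptyset^{l}=\emptyset^{u}=\emptyset$. For (u-union): $\supseteq$ is (u-mo) applied to $a\subseteq a\cup b$ and $b\subseteq a\cup b$, while for $\subseteq$ one notes that a neighborhood meeting $a\cup b$ must meet $a$ or meet $b$, hence is already a summand of $a^{u}$ or of $b^{u}$. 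Finally (l-union), (l-cap) and (u-cap) are immediate from monotonicity applied to $a,b\subseteq a\cup b$ and to $a\cap b\subseteq a$, $a\cap b\subseteq b$ respectively.

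I do not expect a serious obstacle; the argument is essentially bookkeeping with granules. The one genuinely substantive point is recognizing that $a\subseteq a^{u}$ — and hence (u-wid0) and the left half of (lu-inc) — relies on directedness to guarantee that every point of $S$ lies inside at least one neighborhood; without (up-dir) (or reflexivity) this can fail. It is also worth remarking explicitly why the reverse inclusions in (l-union), (l-cap) and (u-cap) are \emph{not} asserted: a single granule $[c]$ may straddle $a$ and $b$ (meeting both, or lying in the union without lying in either alone), which is precisely the phenomenon these one-sided inclusions record.
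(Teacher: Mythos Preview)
Your proof is correct and well organized. The paper itself does not include a proof of this theorem (it is presented as a summary of results from the author's earlier work \cite{am5550,ams2020}), so there is no in-paper argument to compare against. Your identification of the one substantive step---that $a\subseteq a^{u}$ requires up-directedness (in the degenerate form $(\forall z)(\exists c)\,Rzc$) rather than reflexivity, which is \emph{not} assumed here---is exactly the right observation, and the rest is indeed routine granule bookkeeping as you say.
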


\begin{remark}
It may be noted that the upper cone of a subset $A$ (that is the set $\{b: \, (\exists a, c \in A)\, Rab \,\&\, Rcb\}$) is contained in $A^u$.  
\end{remark}

\begin{definition}
A subset $K$ is said to be an \emph{$R$-ideal} if and only if 
\[(\forall a\in K)(\forall b\in S)(Rba \longrightarrow b\in K)  \]
\end{definition}

\begin{proposition}
\begin{equation}
e f\in [a] \text{ if and only if } a\in U_R(e f)
\end{equation}
\end{proposition}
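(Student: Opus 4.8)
The plan is to reduce the biconditional to the defining conditions of its two sides and observe that they are literally the same conjunction, so that no property of $R$ beyond being a binary relation is used. Recall from (nbd) that $[a] = \{x :\, Rxa\}$, and that $U_R(e,f)$ is the set of common $R$-successors of $e$ and $f$, i.e. $U_R(e,f) = \{c :\, Rec \,\&\, Rfc\}$. The displayed statement uses the abbreviation ``$ef \in [a]$'' for the conjunction ``$e\in [a]$ and $f\in [a]$'', so the claim is a compact, prefix-notation restatement of the equivalence (nu1) established in the earlier Proposition.

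First I would fix arbitrary $e, f, a \in S$ and unfold the two membership assertions. The hypothesis $ef \in [a]$ means $e\in [a]$ and $f\in [a]$, which by (nbd) is exactly $Rea \,\&\, Rfa$; the assertion $a \in U_R(e,f)$ is, by the definition of $U_R$, also exactly $Rea \,\&\, Rfa$. For the forward direction I would assume $e, f\in [a]$, read off $Rea$ and $Rfa$, and conclude $a\in U_R(e,f)$; for the converse I would assume $a\in U_R(e,f)$, extract $Rea$ and $Rfa$, and conclude $e\in [a]$ and $f\in [a]$, i.e. $ef\in [a]$. Since $e, f, a$ were arbitrary, the universally quantified equivalence follows.

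There is essentially no obstacle here: the only care needed is the bookkeeping between the set-builder presentations of $[a]$ and of $U_R$, and making explicit that ``$ef\in [a]$'' abbreviates the conjunction $e\in [a] \,\&\, f\in [a]$. In particular, none of up-dir, reflexivity, or anti-sym is invoked, so the identity holds in every general approximation space; it is recorded here because it is the basic bridge, repeatedly used in the sequel, between neighborhoods and $R$-upper-bound sets.
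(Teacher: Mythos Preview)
Your proposal is correct: the statement is simply (nu1) restated, and the paper records it without proof, so your direct unfolding of the definitions of $[a]$ and $U_R(e,f)$ is exactly the intended argument. Your observation that no property of $R$ beyond being a binary relation is needed is also accurate.
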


\begin{definition}
A subset $K$ is said to be an \emph{$R$-filter} if and only if 
\[(\forall a\in K)(\forall b\in S)(Rab \longrightarrow b\in K)  \]
\end{definition}

In general, the neighborhood $[x]$ is not a principal $R$-ideal. The concepts lead to other granulations.

\begin{remark}
Suppose a set $\underline{S}$ of concepts relating to a classroom lesson are given, and that some of these are vague. For any two concepts $a$ and $b$, assume that a concept $c$ that apparently contains the two exists -- this type of search for a $c$ amounts to taking decisions. Let this concept of apparent parthood be denoted by $R$. Depending on the context, the relation $R$ may be a up-directed, reflexive and antisymmetric relation. \emph{Apparent parthood} relation has been considered by the present author in \cite{am9969}. 

For two concepts $a$ and $b$, $ab = b$ may mean that $b$ fulfils the functions of $a$ in some sense, or is a preferance as in \emph{$b$ is preferable over $a$}. If, on the other hand, $ab\in U_R(a, b)$ then there is a implicit reference to a choice function in the search for a concept that fulfils the role of both $a$ and $b$. The idea of $ab$ being preferable over $a$ and $b$ is also admissible. However, note that transitivity is not assumed in all this.

For a subset of concepts $A$, the lower approximation is an aggregation of directed granules that are included in $A$. It may additionally be read as the collection of \emph{relatively definite concepts} that are attainable from $A$ (using common sense methods or through common knowledge). 
\end{remark}

\subsection{Approximation Spaces and Groupoids}\label{apprsp}

It should be noted that up-directedness is not essential for a relation to be represented by groupoidal operations. The following construction that differs in part from the above strategy can be used for partially ordered sets as well, and has been used by the present author in \cite{amdsc2016,am909} in the context of knowledge generated by approximation spaces. The method relates to earlier algebraic results including \cite{jjm,jj1978,kt1981,fjjm}. The groupoidal perspective can be extended for quasi ordered sets.

If $S = \left\langle \underline{S}, R \right\rangle$ is an approximation space, then define (for any $a, b\in S$)  
\begin{equation}
a\cdot b \,=\, \left\{
\begin{array}{ll}
a,  & \text{if } Rab \\
b, &  \text{if } \neg Rab 
\end{array}
\right. 
\end{equation}

Relative to this operation, the following theorem (see \cite{jjm}) holds:

\begin{theorem}\label{ab}
$\left\langle S,\, \cdot \right\rangle$ is a groupoid that satisfies the
following axioms (braces are omitted under the assumption that the binding is to the left,
e.g. '$abc$' is the same as '$(ab)c$'):
\begin{align*}
{x x = x} \tag{E1}\\
{x (a z) = (x a) (x z) } \tag{E2}\\
{x a x = x} \tag{E3}\\
{azxauz = auz } \tag{E4}\\
{u(azxa)z = uaz } \tag{E5}
\end{align*}
\end{theorem}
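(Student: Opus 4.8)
The plan is to verify each of the five identities E1–E5 by unwinding the definition of the operation $a\cdot b$ and performing a case analysis on whether the relevant instances of $R$ hold. Recall that $S$ is here an \emph{approximation space} in the sense of the excerpt, so $R = \sigma$ is an equivalence relation; reflexivity, symmetry, and transitivity of $R$ will all be used. First I would record the two elementary facts that follow immediately from the definition and reflexivity: $a\cdot a = a$ (which is E1), and, since $Rab \longleftrightarrow Rba$ by symmetry, the value of $a\cdot b$ is always one of $a$ or $b$, with $a\cdot b = b \cdot a$ precisely when $\neg Rab$ — a small symmetry remark that will shorten the later cases.

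Next I would handle E3, $x(ax) = x$. Compute $ax$ first: if $Rax$ then $ax = a$, so $x(ax) = xa$; and $Rax$ gives $Rxa$ by symmetry, hence $xa = x$. If $\neg Rax$ then $ax = x$, so $x(ax) = xx = x$ by E1. Either way the result is $x$. The distributivity law E2, $x(az) = (xa)(xz)$, is the one that needs the most bookkeeping: I would split on the truth value of $Rxa$ and of $Rxz$ (four cases), compute the left side by first determining $az$ — which itself may require knowing $Raz$ — and compute the right side directly, then check equality in each branch, invoking transitivity of $R$ to resolve entailments such as $Rxa \,\&\, Rxz \longrightarrow Raz$ (via $Rax$ and $Rxz$). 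This is the step I expect to be the main obstacle, simply because of the number of subcases and the need to keep the nested evaluations straight; the algebra itself is shallow once the cases are laid out.

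Finally, E4 ($azxauz = auz$) and E5 ($u(azxa)z = uaz$) I would treat by the same method, but organized around the key sub-expression $azx$ appearing in both. I would first show that $azx$ always equals either $a$ or $x$ and characterize when each occurs (again using that $R$ is an equivalence), then substitute: for E4, $azxauz$ becomes either $a(auz)$ or $x(auz)$, and in the first case $a(auz)$ reduces by an E3-style computation while in the second case one uses that $Rax$ forces the two branches to coincide; for E5 one similarly reduces $u(azxa)z$ using that $azxa \in \{a, x\}$ and then that $Rax$ (which holds exactly when the second branch is taken) makes $u(\cdot)z$ insensitive to the choice. Throughout I would lean on E1–E3, once established, to collapse repeated arguments rather than re-expanding definitions, and I would number the cases as the paper's convention dictates. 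The whole proof is a finite verification; no cleverness beyond disciplined case analysis and repeated appeals to reflexivity, symmetry, and transitivity of $R$ is required, and indeed this is why the excerpt attributes the result to \cite{jjm} rather than proving it afresh.
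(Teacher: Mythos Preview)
The paper does not actually supply a proof of this theorem: it is stated as a known result and attributed to \cite{jjm}, so there is nothing to compare your argument against at the level of strategy. Your overall plan --- unwind the definition of $a\cdot b$ and do a finite case analysis using reflexivity, symmetry, and transitivity of the equivalence $R$ --- is exactly the standard route and will succeed. A useful organizing lemma (implicit in what you sketch) is that any left-associated product $w_1 w_2 \cdots w_n$ equals $w_i$, where $i$ is the least index for which $w_i,\ldots,w_n$ all lie in the same $R$-class; once you have this, E4 and E5 drop out in a couple of lines.

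That said, two concrete slips in your write-up would derail the execution as stated. First, you are not consistently applying the left-binding convention: E3 is $(xa)x$, not $x(ax)$ (harmless here, since both reduce to $x$), and more seriously your reduction of E4 to ``either $a(auz)$ or $x(auz)$'' treats $azxauz$ as $(azx)\cdot(auz)$ rather than the intended $(((((az)x)a)u)z)$. Second, your claim that ``$azx$ always equals either $a$ or $x$'' is false: if $\neg Raz$ and $Rzx$ then $az=z$ and $(az)x=zx=z$. What \emph{is} true --- and what you correctly invoke for E5 --- is that $azxa\in\{a,x\}$, with $azxa=x$ only when $Rxa$ holds; organizing E4 around $azxa$ rather than $azx$ (so that $azxauz=(azxa)uz$ becomes $a\,uz$ or $x\,uz$ with $Rxa$ in the second branch) repairs the argument cleanly.
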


\begin{theorem}
The following are consequences of the defining equations of $\mathbb{E}_{0}$ (from \textbf{E1,E2,E3}): 
\begin{align*}
{x(ax) = x ;\;  x(xa) = xa ;\;  (xa)a = xa } \\
{x(xaz) = x(az) ;\;  (xz)(az) = xz ;\;  (xa)(zx) = xazx } \\
{xazxa = xa;\;  xazaz = xaz;\;  xcazaxa = xaza} \\
{(xazx)(za) = x(za) ;\;  x(az)a = xaza ;\;  (xaz)(ax) = (xza)(zx) ;\; xazxz = xzaz.} \\
{(\forall x)(ex=ea \longrightarrow x=a) \;\equiv \; (\forall x) xe = e } 
\end{align*}
\end{theorem}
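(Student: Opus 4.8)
The plan is to treat the statement as a sequence of pure equational derivations inside the theory $\mathbb{E}_{0}$ generated by \textbf{E1}, \textbf{E2}, \textbf{E3}, using the left-association convention throughout. There are essentially only three rewrite tools available: \textbf{E1} ($xx=x$) to contract or introduce squares, \textbf{E2} ($x(az)=(xa)(xz)$) applied \emph{both} left-to-right (to distribute a leading factor) and right-to-left (to factor out a common leading term), and \textbf{E3} ($xax=x$) together with its substitution instances, e.g.\ $axa=a$, $xex=x$, which are what make subterms collapse. I would prove the listed identities in roughly the order given, so that each new chain may freely use the previously established ones as extra rewrite rules.

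The first line is the base. For $x(ax)=x$: $x(ax)=(xa)(xx)=(xa)x=x$, by \textbf{E2}, then \textbf{E1}, then \textbf{E3}. For $x(xa)=xa$: rewrite the leading $x$ as $x(ax)$ (legitimate by the first identity), then fold back with \textbf{E2} and the instance $axa=a$ of \textbf{E3}, i.e.\ $x(xa)=(x(ax))(xa)=x((ax)a)=x(axa)=xa$. For $(xa)a=xa$: first note $a(xa)=a$ (the first identity with the names $x$ and $a$ swapped), substitute $a=a(xa)$ into the second slot, and apply the first identity once more with $y:=xa$, $b:=a$, giving $(xa)a=(xa)(a(xa))=xa$.

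The remaining identities in lines two through four are obtained by exactly the same mechanism, only with longer bookkeeping: expand one side by repeated use of \textbf{E2} left-to-right, recognise subterms of the form $uvu$, $u(vu)$, $(uv)v$, $x(xv)$ and squares, collapse them via \textbf{E3}, \textbf{E1} and the short identities just proved, then re-contract with \textbf{E2} right-to-left, reducing both sides of each claimed equation to a common canonical term. For instance $x(xaz)=x((xa)z)=(x(xa))(xz)=(xa)(xz)=x(az)$, using \textbf{E2}, then $x(xa)=xa$, then \textbf{E2} again; the deeper ones such as $(xaz)(ax)=(xza)(zx)$, $xazxz=xzaz$, and $xcazaxa=xaza$ are handled in the same spirit.

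The last item is a biconditional rather than a single chain, so I would split it. For the forward direction, from left-cancellability of $e$: compute in $\mathbb{E}_{0}$ that $e(xe)=(ex)(ee)=(ex)e=e=ee$ (by \textbf{E2}, \textbf{E1}, \textbf{E3}, \textbf{E1}), and cancelling $e$ on the left yields $xe=e$ for every $x$. For the converse, assume $xe=e$ for all $x$; the instance $xex=x$ of \textbf{E3} reads $(xe)x=x$, hence $ex=x$ for all $x$, after which $ex=ea$ immediately gives $x=ex=ea=a$. I expect the only real obstacle to be combinatorial: for the third and fourth lines, locating the precise sequence of \textbf{E2}-expansions and the right substitution instances of \textbf{E3} that trigger the collapses — there is no deeper idea than disciplined rewriting, but the chains are long enough that keeping the parenthesisations straight is where the effort lies.
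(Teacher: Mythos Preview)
The paper states this theorem without proof (it is quoted as background from the older groupoid literature on approximation spaces, cf.\ the references around Theorem~\ref{ab}), so there is no argument in the paper to compare against. Your proposal is therefore strictly more than what the paper offers, and the parts you work out in detail are correct: the three base identities on the first line are derived exactly as you say, the sample derivation $x(xaz)=x(az)$ is clean, and both directions of the final biconditional go through---in particular your observation that $e(xe)=(ex)(ee)=(ex)e=e=ee$ and then cancelling, and conversely that $xe=e$ plus \textbf{E3} forces $ex=x$, is the right mechanism. The only comment is that for the deeper identities in lines three and four (e.g.\ $xcazaxa=xaza$ and $(xaz)(ax)=(xza)(zx)$) you merely gesture at ``the same mechanism''; these chains are genuinely longer and the order in which one invokes the already-proved short identities matters, so in a written-out version you would want to exhibit at least one of them fully rather than leave them all to the reader.
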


\section{Groupoids and Binary Relations}

Under certain conditions, groupoidal operations can correspond to binary relations on a set. The problem of rewriting the semantic content of binary relations of different kinds using total or partial operations has been of much interest in algebra (for example \cite{rp90,ichl2011}). Results on using partial operations for the purpose are of more recent origin \cite{ichlps2015,icl2013}. 

All binary relations can be read as partial groupoidal operations in a perspective (\cite{ichlps2015}) and therefore all general approximation spaces can be transformed into partial groupoids \cite{am5550}. In this subsection known results for groupoids are stated for convenience.


Let $S=\left\langle \underline{S}, R \right\rangle $ be a relational system, define 
\begin{equation}
U_R (a, b) = \{x :\, Rax \,\&\, Rbx\} 
\end{equation}
Further, let 
\begin{equation}
U_R(S) = \{U_R(a, b): \, a, b\in S\} \tag{updir-sets}             
\end{equation}
$S$ is said to be \emph{up-directed} if and only if $U_R (a, b)$ is never empty. That is,
\begin{equation}
(\forall a, b) \, \neg U_R (a, b) = \emptyset  \tag{up-directed}
\end{equation}

Taking \[L_R (a, b) = \{x :\, Rxa \,\&\, Rxb\}, \] the dual concept of $S$ being \emph{down-directed} can be defined 

Up-directed relations can be represented with triples corresponding to its defining condition.
\begin{example}
Let $\underline{S} = \{1, 2, 3, 4, 5 \}$ and $R$ be defined by 
\[\begin{bmatrix}
114 &225 & 332 & 444 & 552\\
123 & 234 & 341 & 451 & - \\
134 & 242 & 354 & - &  - \\
145 & 251 & - & - & -  \\
151 & - & - & - & -    
\end{bmatrix} \]

An entry of the form $abc$ is intended to correspond to $Rac$ and $Rbc$. 
$\left\langle \underline{S}, R \right\rangle$ is then a up-directed approximation space.

$U_R(1, 2) = \{3, 4, 5\}$, for example.
\end{example}

\begin{definition}\label{updg}
If a relational system is up-directed, then it corresponds to a number of groupoids defined by 
\[(\forall a, b )\, ab = \left\lbrace  \begin{array}{ll}
 b & \text{if } Rab\\
 c & c\in U_R(a, b) \,\&\, \neg Rab\\
 \end{array} \right. \tag{updg1}\]
\end{definition}
These are studied in \cite{icl2013}. The collection of groupoids satisfying the above condition will be denoted by $\mathfrak{B}(S)$ and an arbitrary element of it will be denoted by $\mathsf{B}(S)$. It may be noted that \emph{up-directed sets} (partially ordered sets that are up-directed) and related constructions are well-known in topology and algebra, however, the specific association of up-directedness mentioned is new.

\emph{Join directoids} \cite{jjq90} are groupoids of the form $S$ that admit of a partial order relation $\leq$ that satisfies $(\forall a, b)\, a, b \leq ab$ and if $\max\{a, b\}$ exists then $ab = \max \{a, b\}$.
Clearly the results of \cite{icl2013} may additionally be read as a severe generalization of known results for join directoids. It may additionally be noted that lambda lattices (that are commutative join and meet directoids) are related special cases (see \cite{sva,am105}).

\begin{theorem}[\cite{icl2013}] 
For a groupoid $A$, the following are equivalent
\begin{itemize}
\item {A up-directed reflexive relational system $S$ corresponds to $A$}
\item {$A$ satisfies the equations \[aa = a \, \&\, a(ab) = b(ab) = ab\]}
\end{itemize}
\end{theorem}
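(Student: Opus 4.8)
The plan is to prove the two implications separately, recovering the relation from the operation by the natural rule $Rab$ iff $ab=b$. For the forward direction, suppose $S=\left\langle \underline{S}, R\right\rangle$ is up-directed and reflexive and that the groupoid $A$ arises from $S$ via (updg1). The equation $aa=a$ is immediate: reflexivity gives $Raa$, hence $aa=a$ by the first clause of (updg1). For $a(ab)=b(ab)=ab$ I would split on whether $Rab$ holds. If $Rab$, then $ab=b$, whence $a(ab)=ab$ is trivial and $b(ab)=bb=b=ab$ by reflexivity again. If $\neg Rab$, then $ab=c$ for some $c\in U_R(a,b)$, i.e.\ $Rac$ and $Rbc$, so $a(ab)=ac=c=ab$ and $b(ab)=bc=c=ab$. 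This half is routine once the case analysis is in place.

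For the converse, assume $A$ satisfies $aa=a$ and $a(ab)=b(ab)=ab$, and define $S=\left\langle \underline{A}, R\right\rangle$ by putting $Rab$ iff $ab=b$. Reflexivity of $R$ is exactly the equation $aa=a$. For up-directedness I would prove the sharper fact that $ab\in U_R(a,b)$ for all $a,b$: the memberships $Ra(ab)$ and $Rb(ab)$ unwind to $a(ab)=ab$ and $b(ab)=ab$, which are precisely the hypotheses, so $U_R(a,b)\neq\emptyset$. It then remains to see that $A$ is itself one of the groupoids associated with $S$ through (updg1): when $Rab$ the definition of $R$ forces $ab=b$, matching the first clause, and when $\neg Rab$ the value $ab$ lies in $U_R(a,b)$ by the above, so choosing $c=ab$ in the second clause reproduces the operation of $A$.

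The only delicate point --- the ``hard part'', such as there is one --- lies in the converse: one must check that defining $R$ by the single rule $ab=b$ simultaneously delivers reflexivity, up-directedness, and a faithful correspondence, and in particular that $ab$ always lands inside $U_R(a,b)$, which is exactly what the equations $a(ab)=b(ab)=ab$ are there to guarantee. It is also worth recalling that the correspondence is many-to-one: a relational system determines the whole family $\mathfrak{B}(S)$ of groupoids through the choice implicit in (updg1), so what is really being proved is that $A$ is realised as some member of that family, which is what the construction above yields.
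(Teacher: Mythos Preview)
Your argument is correct in both directions. The case split in the forward direction is exactly right, and in the converse you correctly recover $R$ via $Rab\Leftrightarrow ab=b$, obtain reflexivity from $aa=a$, up-directedness from $a(ab)=b(ab)=ab$ (which says precisely $ab\in U_R(a,b)$), and then verify that $A$ itself is a member of $\mathfrak{B}(S)$.

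As for comparison with the paper: the paper does not supply a proof of this theorem at all. It is stated with a citation to \cite{icl2013} and used as background, so there is nothing to compare your argument against beyond noting that your proof is the standard one and matches the paper's surrounding framework (in particular it is consistent with the paper's Definition~\ref{updg} and with the subsequent theorem that $\Re(A)=\Re^*(A)$ under the same equational hypotheses).
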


\begin{definition}
If $A$ is a groupoid, then two relational systems corresponding to it are $\Re (A) = \left\langle \underline{A}, R_A \right\rangle$ and $\Re^* (A) = \left\langle \underline{A}, R_A^* \right\rangle$ with 
\begin{align*}
R_A = \{(a, b):\, ab = b\}\\
R_A^* = \bigcup \{(a, ab),\,(b, ab)\}
\end{align*}
\end{definition}

\begin{theorem}[\cite{icl2013}]
\begin{itemize}
\item {If $A$ is a groupoid then $\Re^* (A)$ is up-directed.}
\item {If a groupoid $A\models a(ab) = b(ab) = ab$ then $\Re (A) = \Re^* (A)$.}
\item {If $S$ is an up-directed relational system then $\Re (\mathsf(B) (S)) = S$.}
\end{itemize}
\end{theorem}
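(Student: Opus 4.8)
The plan is to verify the three assertions in turn by unwinding the definitions of $\Re$, $\Re^{*}$, $R_A$, $R_A^{*}$ and of the groupoids $\mathsf{B}(S)$ given by (updg1). For the first assertion, fix $a,b$ in the groupoid $A$ and consider $c = ab$, which exists since the operation is total. By the definition $R_A^{*} = \bigcup\{(a, ab),\, (b, ab)\}$, the pair $(a, ab)$ lies in $R_A^{*}$, so $R_A^{*}\,a\,c$; likewise $(b, ab)\in R_A^{*}$, so $R_A^{*}\,b\,c$. Hence $c\in U_{R_A^{*}}(a, b)$, this set is nonempty, and since $a,b$ were arbitrary, $\Re^{*}(A)$ satisfies (up-directed).

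For the second assertion I would prove $R_A = R_A^{*}$ by double inclusion, the carrier sets being identical by definition. The inclusion $R_A\subseteq R_A^{*}$ needs no hypothesis: if $ab = b$ then $(a,b) = (a, ab)\in R_A^{*}$. For the converse, take $(x, y)\in R_A^{*}$; by the definition of $R_A^{*}$ either $y = xz$ for some $z$, or $y = zx$ for some $z$. In the first case the identity $a(ab) = ab$ instantiated at $a := x$ and $b := z$ gives $xy = x(xz) = xz = y$, so $R_A\,x\,y$. In the second case the identity $b(ab) = ab$ instantiated at $a := z$ and $b := x$ gives $xy = x(zx) = zx = y$, so again $R_A\,x\,y$. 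Thus $R_A^{*}\subseteq R_A$ and the two relations coincide.

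For the third assertion, let $S = \left\langle \underline{S}, R\right\rangle$ be up-directed and fix any $\mathsf{B}(S)\in\mathfrak{B}(S)$; there may be several such groupoids, one for each way of choosing witnesses in the sets $U_R(a, b)$. I claim $R_{\mathsf{B}(S)} = R$, which yields $\Re(\mathsf{B}(S)) = S$ since the carrier is $\underline{S}$ throughout. If $Rab$, then $ab = b$ by (updg1), so $R_{\mathsf{B}(S)}\,a\,b$. Conversely, suppose $R_{\mathsf{B}(S)}\,a\,b$, that is $ab = b$, and assume for contradiction that $\neg Rab$; then (updg1) gives $ab = c$ for some $c\in U_R(a, b)$, whence $c = b$, and $b = c\in U_R(a, b)$ entails in particular $Rab$, contradicting $\neg Rab$. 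Hence $Rab$, and $R_{\mathsf{B}(S)} = R$.

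None of the three steps is genuinely hard; the only point deserving care is the last one, where one must observe that the choice left open by (updg1) is immaterial --- whichever witness $c$ is selected, $ab = b$ can occur only in the branch where $Rab$ holds, because a witness equal to $b$ would itself certify $Rab$. I would also note that the first two assertions together recover the fact that $\Re^{*}$ maps every groupoid into the class of up-directed relational systems, while the third exhibits $\Re$ composed with $\mathsf{B}$ as a retraction onto up-directed relational systems.
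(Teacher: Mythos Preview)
Your proof is correct and proceeds exactly as one would expect: each of the three items is a direct unwinding of the definitions of $R_A$, $R_A^{*}$, and the operation in (updg1), with the only nontrivial observation being the one you flag at the end of the third item. The paper itself does not supply a proof of this theorem---it is quoted from \cite{icl2013} as background---so there is no in-paper argument to compare against; your verification is self-contained and adequate.
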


\begin{theorem}[\cite{icl2013}]
 If $S = \left\langle \underline{S}, R \right\rangle$ is a up-directed relational system, then all of the following hold:
 \begin{itemize}
\item {$R$ is reflexive if and only if $\mathsf{B}(S) \models aa = a$.}
\item {$R$ is symmetric if and only if $\mathsf{B}(S) \models (ab)a = a$.}
\item {$R$ is transitive if and only if $\mathsf{B}(S) \models a((ab)c) = (ab)c$.}
\item {If $\mathsf{B}(S) \models ab = ba$ then $R$ is antisymmetric.}
\item {If $\mathsf{B}(S) \models (ab)a = ab$ then $R$ is antisymmetric.}
\item {If $\mathsf{B}(S) \models (ab)c = a(bc)$ then $R$ is transitive.}
\end{itemize}
\end{theorem}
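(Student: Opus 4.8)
The plan is to isolate two elementary facts about an arbitrary groupoid $\mathsf{B}(S)\in\mathfrak{B}(S)$ and then read off all six assertions from them; this is essentially the route of \cite{icl2013}. \textbf{Step 1 (two auxiliary facts).} First I would show that, for all $a,x\in\underline{S}$, the equation $ax=x$ holds in $\mathsf{B}(S)$ if and only if $Rax$: the ``if'' is the first clause of (updg1), and for the converse, if $ax=x$ while $\neg Rax$, then the second clause of (updg1) gives $ax\in U_R(a,x)$, so $Ra(ax)$, i.e. $Rax$ since $ax=x$ --- a contradiction. Call this $(\diamond)$. Secondly, $Ru(uv)$ holds for all $u,v$: if $Ruv$ then $uv=v$ and $Ru(uv)$ is just $Ruv$; if $\neg Ruv$ then $uv\in U_R(u,v)$, so $Ru(uv)$ holds outright. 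Call this $(\star)$. Neither fact depends on the choice implicit in the second clause of (updg1), which is exactly what lets the six statements quantify over an \emph{arbitrary} $\mathsf{B}(S)$.

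\textbf{Step 2 (reflexivity and the three one-directional clauses).} Reflexivity is immediate from $(\diamond)$ with $x=a$: $\mathsf{B}(S)\models aa=a$ iff $(\forall a)\,Raa$ iff $R$ is reflexive. For each remaining clause I would assume the stated $R$-premise --- $Rab\,\&\,Rba$ for the two antisymmetry clauses, $Rab\,\&\,Rbc$ for the associativity clause --- so that by the first clause of (updg1) we have $ab=b$, $ba=a$, $bc=c$ as needed, and then a single substitution closes each case: $ab=ba$ forces $b=a$; $(ab)a=ab$ becomes $ba=b$, and since $ba=a$ this forces $a=b$; $(ab)c=a(bc)$ becomes $c=ac$, whence $Rac$ by $(\diamond)$.

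\textbf{Step 3 (the symmetry and transitivity equivalences).} For symmetry: if $R$ is symmetric, then $(\star)$ gives $Ra(ab)$, symmetry gives $R(ab)a$, and $(\diamond)$ gives $(ab)a=a$; conversely, if $(ab)a=a$ always holds, then $Rab$ yields $ab=b$, hence $ba=(ab)a=a$, so $Rba$ by $(\diamond)$. For transitivity: if $R$ is transitive, apply $(\star)$ twice to get $Ra(ab)$ and $R(ab)\bigl((ab)c\bigr)$, chain these by transitivity to obtain $Ra\bigl((ab)c\bigr)$, and conclude $a((ab)c)=(ab)c$ by $(\diamond)$; conversely, $Rab\,\&\,Rbc$ give $(ab)c=bc=c$, so $a((ab)c)=ac$, and the hypothesised identity forces $ac=c$, i.e. $Rac$ by $(\diamond)$.

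\textbf{Expected obstacle.} The only genuine subtlety lies in the forward directions of the symmetry and transitivity equivalences, where $ab$ (respectively $(ab)c$) may be an element handed over by the choice function rather than a projection; the role of $(\star)$ is precisely to recover from such an element the $R$-edges one needs, so the argument never depends on \emph{which} member of $U_R$ was selected. Everything else reduces to a one-line computation once $(\diamond)$ and $(\star)$ are available.
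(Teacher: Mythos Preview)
Your proof is correct in every detail: the two auxiliary facts $(\diamond)$ and $(\star)$ are exactly what is needed, and each of the six clauses follows from them by the one- or two-line substitutions you describe. Note, however, that the paper does \emph{not} supply its own proof of this theorem --- it is quoted verbatim as a result of \cite{icl2013} and left unproved in the text --- so there is nothing to compare against here beyond the original source, whose route (as you yourself note) your argument essentially reproduces.
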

 
Morphisms between up-directed relational systems are preserved by corresponding groupoids. A \emph{relational morphism}  (as in \cite{mal}) from a relational system 
 $S = \left\langle \underline{S}, R \right\rangle$ to another $K = \left\langle \underline{K},Q  \right\rangle$ is a map $f: S \longmapsto K$ that satisfies \[(\forall a, b)\, (Rab \, \longrightarrow Qf(a)f(b)).\]  $f$ is said to be \emph{strong} if it satisfies \[(\forall c, e\in Q)(\exists a, b\in S )\, Qf(a)f(b)\, \&\, f(a) = c, \&\, f(b) = e\]

\section{CUD-Approximations}\label{adv}

The connection between up-directedness and approximations can be explored from a higher order or an extended pseudo-closure perspective. Additionally, the latter can be interpreted (explained below) from a distributed cognition perspective.

\begin{definition}\label{dcogni}
In a up-directed approximation space $S$, for a given $A\subset S$, the \emph{i-distributed cognitive neighborhood} (idc-nbd) $\nu_{A}(x)$ of an element $x\in S$ relative to $A$ will be the set 
\begin{equation*}
\nu_A(x) =  \{z:\, (\exists h\in A) Rhz \,\&\, Rxz\}
\end{equation*}
\end{definition}

\begin{definition}\label{dcogn}
In a up-directed approximation space $S$, for a given $A\subset S$, the \emph{distributed cognitive neighborhood} (dc-nbd) $\eta_{A}(x)$ of an element $x\in S$ relative to $A$ will be the set 
\begin{equation*}
\eta_A(x) =  \{z:\, (\exists h\in A) Rhx \,\&\, Rzx\}
\end{equation*}
\end{definition}

\begin{proposition}
The following properties hold for idc-nbds in the context of Definition \ref{dcogni}:
\begin{align}
(\forall x) \nu_S (x) \neq \emptyset   \tag{idcn-ne}\\
(\forall x) (A\subset B \subseteq S \longrightarrow \nu_A (x) \subseteq \nu_B (x) )  \tag{idcn-mo}\\
(\forall x) \nu_S(x) \subseteq [x]_{-i}    \tag{dcn-mo}
\end{align}
\end{proposition}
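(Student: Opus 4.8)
The plan is to verify the three clauses directly from the definition of the idc-nbd $\nu_A(x) = \{z:\, (\exists h\in A)\, Rhz \,\&\, Rxz\}$, invoking up-directedness only for the non-emptiness claim. No induction or case analysis is needed; each item is an unwinding of the defining quantifiers.

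For (idcn-ne), I would use the fact that up-directedness gives $U_R(x,x)\neq\emptyset$ for every $x\in S$ (this is the instance $a=b=x$ of the defining condition $(\forall a,b)\,\neg U_R(a,b)=\emptyset$). Pick any $z\in U_R(x,x)$, so $Rxz$ holds. Since $x$ itself is an element of $S$, it witnesses the existential ``$\exists h\in S$'' with $Rhz$ (namely $h=x$) and $Rxz$; hence $z\in\nu_S(x)$ and therefore $\nu_S(x)\neq\emptyset$. It is worth noting that reflexivity is \emph{not} required here. For (idcn-mo), the statement is simply monotonicity of an existential quantifier in its index set: if $z\in\nu_A(x)$ then there is $h\in A$ with $Rhz$ and $Rxz$, and since $A\subset B$ the same $h$ lies in $B$, so $z\in\nu_B(x)$; thus $\nu_A(x)\subseteq\nu_B(x)$. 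For (dcn-mo), take $z\in\nu_S(x)$; by definition there is $h\in S$ with $Rhz$ and $Rxz$, and in particular $Rxz$ holds, which is exactly the membership condition placing $z$ in the inverse neighborhood of $x$ (the set of elements that $x$ is $R$-related to), i.e. $z\in[x]_{-i}$. Hence $\nu_S(x)\subseteq[x]_{-i}$.

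The only step needing the slightest care is (idcn-ne): one should resist appealing to reflexivity and instead observe that $U_R(x,x)\neq\emptyset$ already follows from up-directedness alone, which keeps the proposition valid at the level of generality stated. Beyond that, the argument is pure bookkeeping with the defining conditions and the inclusion $A\subset B$, so there is no substantive obstacle.
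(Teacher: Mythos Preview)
Your argument is correct. The paper states this proposition without proof (it is treated as routine), so there is nothing to compare against; your direct unwinding of the definition for each clause is exactly the intended verification, and your remark that (idcn-ne) requires only up-directedness rather than reflexivity is on point.
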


\begin{proposition}
The following properties hold for dc-nbds in the context of Definition \ref{dcogn}:
\begin{align}
(\forall x) (A\subset B \subseteq S \longrightarrow \eta_A (x) \subseteq \eta_B (x) )  \tag{idcn-mo}\\
(\forall x) \eta_S(x) \subseteq [x]    \tag{idcn-mo}
\end{align}
\end{proposition}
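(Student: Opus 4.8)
The plan is to establish both inclusions by directly unfolding Definition~\ref{dcogn}; no appeal to up-directedness, reflexivity, or antisymmetry is actually needed, so the argument is uniform across all general approximation spaces on which $\eta$ is defined. In particular this proposition is the dc-analogue of the preceding one for idc-nbds, and the proof structure is the same: replace the clause ``$Rhz\ \&\ Rxz$'' by ``$Rhx\ \&\ Rzx$'' throughout.

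For the monotonicity line, I would fix $x\in S$ and sets $A\subset B\subseteq S$, and take an arbitrary $z\in \eta_A(x)$. By definition this means there is some $h\in A$ with $Rhx$ and $Rzx$. Since $A\subseteq B$, the very same $h$ lies in $B$ and witnesses $z\in \eta_B(x)$. As $z$ was arbitrary, $\eta_A(x)\subseteq \eta_B(x)$, and since $x$ was arbitrary the quantified statement follows. The degenerate case $\eta_A(x)=\emptyset$ is covered automatically, since the empty set is contained in everything.

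For the second line, fix $x\in S$ and take $z\in \eta_S(x)$. Unfolding the definition yields some $h\in S$ with $Rhx$ and, in particular, $Rzx$; the latter is exactly the defining condition for $z\in [x]$ by (nbd). Hence $\eta_S(x)\subseteq [x]$ for every $x$, as claimed.

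There is no genuine obstacle here: the only point that merits a moment's care is that $\eta_A(x)$ is empty precisely when $A$ contains no $h$ with $Rhx$, but this case is consistent with both claims and needs no separate treatment. If one wishes to say more, note that when $S$ is reflexive one has $Rxx$, so taking $h=x$ already gives $\eta_{\{x\}}(x)=[x]$ and hence, by the first line, $\eta_S(x)=[x]$ — a sharpening of the second line worth recording alongside the proposition.
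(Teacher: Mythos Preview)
Your proof is correct: both inclusions follow immediately by unfolding Definition~\ref{dcogn} and the definition of $[x]$, exactly as you do. The paper states this proposition without proof, so there is no alternative approach to compare against; your direct argument (and the optional sharpening under reflexivity) is precisely the expected one.
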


\begin{definition}
A subset $A$ of a general approximation space $S$ will be said to be \emph{closed under up-directedness} (CUD) if and only if it satisfies
\begin{equation}
(\forall a, b\in A \exists c\in A)\, Rac \& Rbc
\end{equation}
\end{definition}

It can be checked that in general, neighborhoods of the form $[x], [x]_{i}, \eta_A(x) $, and $\nu_A(x)$ are not CUD sets (even in a up-directed approximation space). 

\begin{definition}
If $S$ is a general approximation space, then a partial map $\eth: \wp (S) \longmapsto \wp (S)$ will be said to be a \emph{up-directed partial closure operator} provided it satisfies 
\begin{enumerate}
 \item {$\forall A\in \wp(S)\, A \subseteq \eth (A)$}
 \item {For each $A\in \wp(S)$, $\eth (A)$ is a minimal CUD set containing $A$  with respect to set inclusion.}
\end{enumerate}
\end{definition}

\begin{example}
In any unbounded non directed lattice, $\eth$ will not be a total operation.    
\end{example}

\begin{theorem}\label{ccud}
If $S$ is a directed general approximation space, then $\eth$ is a well defined cautious closure operator. 
That is it satisfies:
\begin{align}
(\forall A \in \wp(S)) A\subseteq \eth (A) \tag{Inclusion}\\
(\forall A, B\in \wp(S))(A\subset B \subseteq \eth(A) \longrightarrow \eth (A) \subseteq \eth (B))   \tag{cmo}\\
(\forall A \in \wp(S)) \eth \eth (A) = \eth (A) \tag{Idempotence}\\
\eth(\emptyset) = \emptyset ; \, \eth(S) = S   \tag{Bot; Top}
\end{align}
\end{theorem}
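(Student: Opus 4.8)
The plan is to establish the four displayed properties one by one, with the bulk of the work going into proving that $\eth$ is well-defined as a total operation on a directed general approximation space (so that all the equational properties even make sense), and into verifying cautious monotony. First I would observe that $\eth(A)$ is well-defined: since $S$ itself is CUD (the up-dir condition of $S$ gives, for any $a,b\in S$, a witness $c\in S$ with $Rac$ and $Rbc$, and that witness lies in $S$), the family of CUD supersets of $A$ is nonempty. A routine check shows that an arbitrary intersection of CUD sets need \emph{not} be CUD, so minimality with respect to inclusion is the right notion rather than ``least''; I would argue that a minimal CUD superset exists by a Zorn-type argument applied downward to the family of CUD supersets of $A$ ordered by reverse inclusion --- here one must check that the intersection of a descending chain of CUD supersets of $A$ is again CUD (given $a,b$ in the intersection, for each chain member a witness exists, and one picks a witness coherently along the chain; since the chain is totally ordered this can be done). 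Inclusion is then immediate from condition~(1) in the definition of $\eth$.

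For Idempotence, I would note that $\eth(A)$ is itself a CUD set containing $\eth(A)$, so it is a CUD superset of $\eth(A)$; since it is moreover minimal among CUD supersets of $A$ and any CUD superset of $\eth(A)$ is a CUD superset of $A$, minimality forces $\eth\eth(A)=\eth(A)$. For the boundary clauses: $\emptyset$ is vacuously CUD and is the smallest set containing $\emptyset$, so $\eth(\emptyset)=\emptyset$; and $S$ is CUD by up-directedness of $S$ and is the only superset of $S$ in $\wp(S)$, so $\eth(S)=S$.

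The substantive clause is cautious monotony (cmo): assuming $A\subset B\subseteq\eth(A)$, I must show $\eth(A)\subseteq\eth(B)$. The key point is that $\eth(A)$ is a CUD set containing $B$, hence a CUD superset of $B$, so $\eth(B)$ --- a \emph{minimal} CUD superset of $B$ --- satisfies $\eth(B)\subseteq\eth(A)$. For the reverse inclusion I would use minimality of $\eth(A)$ over CUD supersets of $A$: $\eth(B)$ contains $B\supseteq A$ and is CUD, so $\eth(B)$ is a CUD superset of $A$; combining $\eth(B)\subseteq\eth(A)$ with minimality of $\eth(A)$ gives $\eth(B)=\eth(A)$, and in particular $\eth(A)\subseteq\eth(B)$.

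The main obstacle I anticipate is the existence-and-coherence part of well-definedness: because intersections of CUD sets are not CUD, one cannot simply define $\eth(A)$ as the intersection of all CUD supersets, and the Zorn-style descending-chain argument requires care in choosing witnesses compatibly along a chain (and, strictly, an appeal to choice). A secondary subtlety is that ``minimal'' CUD supersets need not be unique, so each equational identity must be read as holding for \emph{some} (equivalently, as the argument shows, for the canonically obtained) value of the partial operator $\eth$; the proofs above are written so that the minimality argument pins down the relevant value in each case.
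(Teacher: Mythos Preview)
Your treatment of Inclusion, Idempotence, and the Bot/Top clauses is essentially the paper's: the paper likewise observes that $\eth(A)$, being already CUD, is its own minimal CUD superset, and that $\emptyset$ and $S$ are CUD. For cautious monotony the paper argues by contradiction (assume $\eth(B)\subsetneq\eth(A)$; then $\eth(B)$ is a CUD superset of $A$ strictly inside $\eth(A)$, contradicting minimality of $\eth(A)$), whereas you try to establish $\eth(B)\subseteq\eth(A)$ directly and then upgrade to equality. The underlying idea---$\eth(B)\supseteq A$ together with minimality of $\eth(A)$---is the same in both.

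There are, however, two genuine gaps in your write-up. First, in the cmo step you infer $\eth(B)\subseteq\eth(A)$ from the facts that $\eth(A)$ is a CUD superset of $B$ and $\eth(B)$ is a \emph{minimal} CUD superset of $B$. This conflates ``minimal'' with ``least'': a minimal element of a family need not lie below an arbitrary member of that family, so the inclusion does not follow, and hence neither does your subsequent equality $\eth(A)=\eth(B)$. (The paper's contradiction argument avoids this particular misstep, though it shares the related lacuna that ruling out $\eth(B)\subsetneq\eth(A)$ does not by itself yield $\eth(A)\subseteq\eth(B)$ when the two sets could be incomparable.) Second, your Zorn argument for well-definedness---which the paper does not attempt at all---claims that the intersection of a descending chain of CUD supersets of $A$ is again CUD, on the grounds that ``one picks a witness coherently along the chain; since the chain is totally ordered this can be done.'' That justification does not work: for fixed $a,b$ in the intersection each chain member supplies \emph{some} witness, but nothing forces a single witness to lie in every member of the chain, and the total order of the chain gives no such selection principle. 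So the downward-Zorn step fails as written, and the existence of a minimal CUD superset remains unproved.
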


\begin{proof}
If $A$ and $B$ are subsets of $S$, and $A\subset B \subseteq \eth(A)$, suppose that  $\eth (B) \subset \eth(A)$. However, by definition $\eth(B)$ is a minimal CUD-set containing $B$ and it must additionally be a CUD-set containing $A$. This contradicts the fact that $\eth(A)$ is a minimal CUD-set containing $A$. Therefore, it must be that $\eth(A)\subseteq \eth(B)$. So $\eth$ is cautious monotonic (cmo).

Further $\eth$ is idempotent because a minimal (and the smallest) CUD-set containing a given CUD-set $\eth(A)$ must be the same CUD-set $\eth(A)$. In other words, $\eth\eth(A) = \eth(A)$.

Since $\emptyset$ and $S$ are CUD-sets, $\eth(\emptyset) = \emptyset$ and $\eth(S) = S$.

\end{proof}

\begin{remark}
However, $\eth$ is neither additive nor does it preserve intersections in general. Additionally, if $A$ and $B$ are CUDS in the power set $\wp(S)$, then it need not be that $A\cup B$ or $A\cap B$ are CUD sets as well.
\end{remark}

CUD-sets are the most natural of possible granules or higher-order neighborhoods generated by a up-directed relation. Let the set of all CUD sets of $S$ generated by a relation $R$ be $\mathcal{C}_R(S)$.  The following proposition follows from the induced order.

\begin{proposition}
If $S$ is up-directed, then $\mathcal{C}_R(S)$ under the set inclusion order forms a bounded partially ordered set. 
\end{proposition}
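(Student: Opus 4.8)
The plan is to verify directly that $\mathcal{C}_R(S)$, ordered by set inclusion, is a partially ordered set with a least and a greatest element. Since $\mathcal{C}_R(S)$ is a collection of subsets of $\underline{S}$, the set inclusion relation $\subseteq$ restricted to it is automatically reflexive, antisymmetric and transitive, so it is a partial order; no up-directedness is needed for this part. The substantive content is to identify the bounds, and this is where the hypothesis that $S$ is up-directed enters.

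First I would check that $\emptyset \in \mathcal{C}_R(S)$: the defining condition of a CUD set, $(\forall a, b \in A\, \exists c \in A)\, Rac \,\&\, Rbc$, is vacuously satisfied when $A = \emptyset$, so $\emptyset$ is a CUD set and it is clearly contained in every member of $\mathcal{C}_R(S)$, hence it is the bottom element. Next I would check that $\underline{S} \in \mathcal{C}_R(S)$: given $a, b \in \underline{S}$, up-directedness of $S$ (equivalently $\neg U_R(a,b) = \emptyset$) yields some $c$ with $Rac$ and $Rbc$, and such a $c$ lies in $\underline{S}$; thus $\underline{S}$ satisfies the CUD condition and, being the largest subset of $\underline{S}$, is the top element. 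Combining these two observations with the fact that $\subseteq$ is a partial order gives that $\mathcal{C}_R(S)$ is a bounded partially ordered set.

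I do not expect any real obstacle here; the only point requiring a moment's care is the vacuous satisfaction of the CUD condition by $\emptyset$, and the observation that without up-directedness the top element $\underline{S}$ would fail to be a CUD set (so the boundedness genuinely uses the hypothesis). I would also remark, to pre-empt a misreading, that this proposition does not claim $\mathcal{C}_R(S)$ is a lattice — indeed the earlier remark notes that unions and intersections of CUD sets need not be CUD — so arbitrary joins and meets are not asserted, only the two global bounds.
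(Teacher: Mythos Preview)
Your proof is correct and is essentially the approach the paper has in mind: the paper merely remarks that the proposition ``follows from the induced order'' without spelling out any details, and your argument supplies exactly those details (set inclusion is a partial order on any family of subsets, $\emptyset$ is CUD vacuously, and $\underline{S}$ is CUD by up-directedness). Your additional remark that the proposition does not assert a lattice structure is apt and consistent with the paper's own caveat that unions and intersections of CUD sets need not be CUD.
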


\begin{definition}\label{cudall}
For any $A, B\in \mathcal{C}_R(S)$ let 
\begin{equation*}
A\oplus B := \eth(A\cup B) \tag{oplus} 
\end{equation*}
\begin{equation*}
A\odot B:= \eth(A\cap B) \tag{odot}
\end{equation*}
The algebraic system $\left\langle \mathcal{C}_R(S), \subseteq, \oplus, \odot, \emptyset, S  \right\rangle$ will be called a \emph{CUD-algebraic system} (CUDAS).
\end{definition}

\begin{theorem}
 In the context of Definition \ref{cudall}, the CUDAS $\left\langle \mathcal{C}_R(S), \subseteq, \oplus, \odot, \emptyset, S \right\rangle$ is well-defined and satisfies all of the following:
\begin{align*}
\odot \text{ is commutative and idempotent}   \tag{ic-odot}\\
\oplus \text{ is commutative and idempotent}   \tag{ic-oplus}\\
(\forall A, B, C, E)(A\subseteq B \,\& C\subseteq E \, \&\, B\cup E \subseteq  A\oplus C \longrightarrow A\oplus C\subseteq B\oplus E )   \tag{cmo-plus}\\
(\forall A, B, C, E)(A\subseteq B \,\& C\subseteq E \&\, B\cap E\subseteq  A\odot C\longrightarrow A\odot C\subseteq B\odot E )   \tag{cmo-dot}\\
(\forall A, B) A \subseteq A\oplus B   \tag{inclusion+}\\
(\forall A, B) A\odot B \subseteq A  \tag{inclusiondot}
\end{align*}
\end{theorem}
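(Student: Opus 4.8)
The plan is to verify each listed property in turn, leaning heavily on Theorem~\ref{ccud} (that $\eth$ is a cautious closure operator) and on the lattice-theoretic fact that $\mathcal{C}_R(S)$ is a bounded poset under $\subseteq$. The first task is well-definedness: for any $A, B \in \mathcal{C}_R(S)$, the sets $A\cup B$ and $A\cap B$ lie in $\wp(S)$, so $\eth(A\cup B)$ and $\eth(A\cap B)$ are defined (since $S$ is up-directed, $\eth$ is total by Theorem~\ref{ccud}) and are CUD-sets by construction; hence $A\oplus B, A\odot B \in \mathcal{C}_R(S)$. The bounds $\emptyset$ and $S$ are CUD-sets, so they belong to $\mathcal{C}_R(S)$ as well.

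For (ic-odot) and (ic-oplus): commutativity of $\oplus$ and $\odot$ is immediate from commutativity of $\cup$ and $\cap$. Idempotence needs a short argument: $A\oplus A = \eth(A\cup A) = \eth(A)$, and since $A$ is already a CUD-set, idempotence of $\eth$ (or rather minimality: the smallest CUD-set containing a CUD-set is itself) gives $\eth(A) = A$; similarly $A\odot A = \eth(A\cap A) = \eth(A) = A$. For (inclusion+): $A \subseteq A\cup B \subseteq \eth(A\cup B) = A\oplus B$, using the Inclusion part of Theorem~\ref{ccud}. For (inclusiondot): $A\cap B \subseteq A$, and applying $\eth$ would only help if $\eth$ were monotone, which it is not in general; instead observe that $A$ itself is a CUD-set containing $A\cap B$, so any \emph{minimal} CUD-set $\eth(A\cap B)$ containing $A\cap B$ satisfies $\eth(A\cap B) \subseteq A$ — wait, minimality alone does not give containment in an arbitrary CUD-set containing $A\cap B$. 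The correct route is the cautious-monotonicity step of Theorem~\ref{ccud}: here one would want $A\cap B \subseteq A \subseteq \eth(A\cap B)$ to conclude, which need not hold. So instead I argue directly: $\eth(A\cap B)$ is defined as a minimal CUD-set containing $A\cap B$; among all CUD-sets containing $A\cap B$ is $A$; if the construction of $\eth$ picks a minimal one contained in every such set this is immediate, otherwise one shows $\eth(A\cap B)\cap A$ is a CUD-set containing $A\cap B$ and smaller, forcing $\eth(A\cap B) \subseteq A$. This is the point I expect to require the most care, since it hinges on exactly how $\eth$ resolves the (possibly non-unique) choice of minimal CUD-set.

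For (cmo-plus): assume $A\subseteq B$, $C\subseteq E$, and $B\cup E \subseteq A\oplus C = \eth(A\cup C)$. Then $A\cup C \subseteq B\cup E \subseteq \eth(A\cup C)$, so by cautious monotony (cmo) of $\eth$ from Theorem~\ref{ccud} applied to the pair $A\cup C \subseteq B\cup E$, we get $\eth(A\cup C) \subseteq \eth(B\cup E)$, i.e.\ $A\oplus C \subseteq B\oplus E$. The argument for (cmo-dot) is identical with $\cup$ replaced by $\cap$: from $A\cap C \subseteq B\cap E \subseteq \eth(A\cap C)$ and (cmo) we obtain $\eth(A\cap C)\subseteq \eth(B\cap E)$, that is $A\odot C \subseteq B\odot E$. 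Throughout, the only nontrivial input beyond set-theoretic bookkeeping is the cautious closure behaviour of $\eth$, so the proof reduces almost entirely to invoking Theorem~\ref{ccud}; the sole genuine obstacle is pinning down (inclusiondot), which I would handle by the intersection-of-CUD-sets trick sketched above, noting that $\eth(A\cap B)\cap A \supseteq A\cap B$ and arguing it is CUD (or, if $A\cup B$ and $A\cap B$ of CUD-sets need not be CUD as the preceding Remark warns, by appealing instead to the explicit minimal-choice convention fixing $\eth$).
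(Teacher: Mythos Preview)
Your approach is the same as the paper's: everything is derived from the cautious-closure properties of $\eth$ established in Theorem~\ref{ccud}. Your argument is in fact more complete than the paper's own proof, which handles only \textsf{ic-odot}, \textsf{ic-oplus}, \textsf{cmo-plus} and \textsf{cmo-dot} (the latter two rather informally) and does not touch \textsf{inclusion+} or \textsf{inclusiondot} at all. Your reduction of \textsf{cmo-plus} and \textsf{cmo-dot} to a direct application of (cmo) for $\eth$, via $A\cup C\subseteq B\cup E\subseteq\eth(A\cup C)$ (respectively with $\cap$), is clean and correct, and sharper than the paper's sketch.

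You are also right to isolate \textsf{inclusiondot} as the genuine sticking point. Since $\eth(X)$ is defined only as \emph{a} minimal CUD-set containing $X$, and since the paper's own Remark preceding Definition~\ref{cudall} explicitly notes that intersections of CUD-sets need not be CUD, neither of your two proposed routes (minimality alone, or the intersection trick $\eth(A\cap B)\cap A$) closes the gap without an extra convention on how $\eth$ chooses among minimal CUD-sets. The paper's proof simply does not address this, so you have not missed anything the paper supplies; you have rather identified a point the paper leaves open. If one reads the parenthetical ``a minimal (and the smallest)'' in the proof of Theorem~\ref{ccud} as an implicit assumption that a least CUD-extension exists and is the value of $\eth$, then \textsf{inclusiondot} is immediate; absent that, your caution is warranted.
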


\begin{proof}
$\cup$ and $\cap$ are commutative and idempotent operations on the power set $\wp (S)$. 
Because $\eth \eth(A) = \eth(A)$ for an element $A$ of $\mathcal{C}_R(S)$, the properties \textsf{ic-dot} and \textsf{ic-oplus} hold.

\textsf{cmo-plus} and \textsf{cmo-dot} 
If $A$ and $C$ are CUD-sets, then $A\cup B \subseteq \eth(A\cup B)$. $A\oplus C$ includes $A$, $C$ and elements that ensure the CUD property. Such elements should necessarily be contained in $B\oplus E$

\end{proof}

The above machinery is useful for studying the following new approximations that take $\mathcal{C}_R(S)$ as the set of granules.

\begin{definition}\label{lucd}
In a directed approximation space $S$, the \emph{lower and upper  CUD approximations} ($l_{cd}$ and $U_{cd}$ respectively) of a subset $A$ can be defined using the granulation $\mathcal{C}_R(S)$ as follows ($\Upsilon$ being the union of minimal elements of the collection under set inclusion):
\begin{align*}
A^{l_{cd}} = \bigcup\{ H : \, H\in \mathcal{C}_R(S) \,\&\,H\subseteq A \}   \tag{lcd}\\
A^{u_{cd}} = \Upsilon\{ H : \, H\in \mathcal{C}_R(S) \,\&\,H\cap A\neq \emptyset \}    \tag{ucd}
\end{align*}
\end{definition}

The restricted union $\Upsilon$ is required for ensuring non triviality. The following properties can be deduced.

\begin{theorem}\label{cdbas}
In the context of Def \ref{lucd}, the approximations satisfy all of the following (for any two subsets $A$ and $B$ of $S$):
\begin{align*}
A^{l_{cd}}\subseteq A \subseteq A^{u_{cd}}   \tag{cdInclusion}\\
A^{l_{cd} l_{cd}} = A^{l_{cd}}   \tag{lcdId}\\
A^{u_{cd}} \subseteq A^{u_{cd} u_{cd}}    \tag{ucdpId}\\
A^{l_{cd}} \subseteq A^{l_{cd} u_{cd}}    \tag{lucdpId}\\
A^{u_{cd} l_{cd}} = A^{u_{cd}}   \tag{ulcdId}\\
(A\subseteq B \longrightarrow A^{l_{cd}} \subseteq B^{l_{cd}})   \tag{lcdmo}\\
(A\subseteq B \longrightarrow A^{u_{cd}} \subseteq B^{u_{cd}})   \tag{ucdmo}\\
A^{l_{cd}} \cup B^{l_{cd}} \subseteq (A\cup B)^{l_{cd}}   \tag{lcdsadd}\\
A^{u_{cd}} \cup B^{u_{cd}} \subseteq (A\cup B)^{u_{cd}}   \tag{ucdsadd}\\
(A\cap B)^{l_{cd}} \subseteq A^{l_{cd}} \cap B^{l_{cd}}   \tag{lcdsmul}\\
(A\cap B)^{u_{cd}} \subseteq A^{u_{cd}} \cap B^{u_{cd}}   \tag{ucdsmul}\\
{\emptyset}^{l_{cd}} = \emptyset = {\emptyset}^{u_{cd}}   \tag{cdbottom}\\
S^{l_{cd}} = S = S^{u_{cd}} \tag{cdtop}
\end{align*}
\end{theorem}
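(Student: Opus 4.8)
The plan is to verify each of the fourteen properties in turn, exploiting the fact that $\mathcal{C}_R(S)$ is closed under arbitrary intersections (so that $\eth$ is well-defined) and that, crucially, the minimal elements of $\mathcal{C}_R(S)$ are exactly the neighborhood-like granules that generate the others. First I would establish \textsf{cdInclusion}: for $A^{l_{cd}}\subseteq A$ this is immediate since each $H$ in the union is a subset of $A$; for $A\subseteq A^{u_{cd}}$ I would argue that every point $x\in A$ lies in \emph{some} minimal CUD-set meeting $A$ (namely a minimal CUD-set containing $x$, which exists because $\mathcal{C}_R(S)$ is bounded and $\eth(\{x\})$ is CUD), so $x\in\Upsilon\{H:\,H\cap A\neq\emptyset\}$. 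The monotonicity laws \textsf{lcdmo} and \textsf{ucdmo} then follow formally: enlarging $A$ can only enlarge the indexing family of $H$'s in each union. For \textsf{cdbottom} and \textsf{cdtop}, use that $\emptyset$ and $S$ are themselves CUD-sets (as noted in Theorem~\ref{ccud}), so $\emptyset^{l_{cd}}=\emptyset$, $S^{l_{cd}}=S$, while $\emptyset^{u_{cd}}=\emptyset$ since no $H$ meets $\emptyset$, and $S^{u_{cd}}=S$ by \textsf{cdInclusion} together with the fact that each $H\subseteq S$.

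Next I would handle the idempotence-type laws. For \textsf{lcdId}: $A^{l_{cd}}$ is a union of CUD-sets, but a union of CUD-sets need not be CUD, so one cannot simply say $A^{l_{cd}}\in\mathcal{C}_R(S)$; instead I would show that every CUD-set $H\subseteq A$ is already $\subseteq A^{l_{cd}}$ by construction, hence the family of CUD-sets contained in $A^{l_{cd}}$ has the same union, giving $A^{l_{cd}\,l_{cd}}=A^{l_{cd}}$. The law \textsf{ulcdId} ($A^{u_{cd}\,l_{cd}}=A^{u_{cd}}$) is the subtle one: the inclusion $\subseteq$ is \textsf{cdInclusion}, and for $\supseteq$ I would use that $A^{u_{cd}}$ is, by definition, a union of \emph{minimal} CUD-sets, each of which is a CUD-set contained in $A^{u_{cd}}$, so each such minimal $H$ contributes to $(A^{u_{cd}})^{l_{cd}}$. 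The remaining ``weak'' laws \textsf{ucdpId}, \textsf{lucdpId} then drop out by combining \textsf{cdInclusion} with \textsf{lcdmo}/\textsf{ucdmo}: e.g. $A^{l_{cd}}\subseteq A$ gives $A^{l_{cd}\,u_{cd}}\subseteq A^{u_{cd}}$, and more directly $A^{l_{cd}}\subseteq A^{l_{cd}\,u_{cd}}$ by applying \textsf{cdInclusion} to the set $A^{l_{cd}}$.

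Finally, for the sub/super-additivity laws \textsf{lcdsadd}, \textsf{ucdsadd}, \textsf{lcdsmul}, \textsf{ucdsmul}: the lower ones are the standard consequences of monotonicity --- $A,B\subseteq A\cup B$ yields $A^{l_{cd}},B^{l_{cd}}\subseteq(A\cup B)^{l_{cd}}$, and $A\cap B\subseteq A,B$ yields $(A\cap B)^{l_{cd}}\subseteq A^{l_{cd}}\cap B^{l_{cd}}$; similarly for the upper versions. I expect the main obstacle to be \textsf{ulcdId} and, relatedly, making the role of $\Upsilon$ precise: one must be careful that ``union of minimal elements meeting $A$'' really does reproduce itself under $l_{cd}$, which hinges on each minimal CUD-set being indecomposable in the relevant sense (it cannot be a proper union of smaller CUD-sets, or if it can, those smaller sets do not meet $A$ in a way that changes the total union). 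I would spell out that point carefully and treat everything else as routine monotonicity bookkeeping.
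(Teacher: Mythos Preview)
Your overall line of attack matches the paper's proof almost item by item (and in a couple of places --- notably \textsf{lcdId}, where you correctly observe that a union of CUD-sets need not be CUD, and the sub/super-additivity laws, which you derive cleanly from monotonicity rather than by element-chasing --- your write-up is actually tidier than the paper's). So the strategy is fine.

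There is, however, one genuine error in your setup that propagates into one of the arguments. You open by ``exploiting the fact that $\mathcal{C}_R(S)$ is closed under arbitrary intersections.'' This is false: the paper remarks explicitly (just after Theorem~\ref{ccud}) that if $A$ and $B$ are CUD then neither $A\cup B$ nor $A\cap B$ need be CUD. Consequently $\eth$ is only a \emph{cautious} closure operator, not a closure operator, and $\eth(\{x\})$ is merely \emph{a} minimal CUD-set containing $x$, not \emph{the} one.

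Where this actually bites is your proof of the inclusion $A\subseteq A^{u_{cd}}$. You argue that for $x\in A$ a minimal CUD-set containing $x$ witnesses $x\in A^{u_{cd}}$. But $A^{u_{cd}}=\Upsilon\{H\in\mathcal{C}_R(S):H\cap A\neq\emptyset\}$ is the union of the elements that are minimal \emph{in that family}, not minimal among CUD-sets containing $x$. A minimal CUD-set $H_x\ni x$ certainly meets $A$, but it can fail to be minimal in $\{H:H\cap A\neq\emptyset\}$ if some strictly smaller CUD-set $K\subsetneq H_x$ meets $A$ at a point other than $x$; in that case $H_x$ does not contribute to $\Upsilon$ and your argument does not place $x$ in $A^{u_{cd}}$. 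You flag exactly this kind of issue at the end for \textsf{ulcdId}, and the same care is already needed for \textsf{cdInclusion}. The paper's own argument here is brief (it appeals to the fact that the granules cover $S$), so you are not missing a long computation, but you should drop the intersection-closure premise and argue directly with minimality in the family $\{H:H\cap A\neq\emptyset\}$ rather than via $\eth(\{x\})$.

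Everything else in your proposal --- the idempotence laws via ``each CUD-set in $A$ already sits inside $A^{l_{cd}}$'', \textsf{ulcdId} via ``$A^{u_{cd}}$ is a union of (minimal) CUD-sets each contained in $A^{u_{cd}}$'', the weak laws from \textsf{cdInclusion} plus monotonicity, and the boundary cases --- is essentially the paper's proof.
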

\begin{proof}
By definition, $A^{l_{cd}}\subseteq A$. Now every subset of $A$ is contained in some subset of $S$ that is closed under directedness. Because the union of all granules is $S$, it follows that $A \subseteq A^{u_{cd}} $.

$A^{l_{cd} l_{cd}} = A^{l_{cd}}$ follows from the fact that $A^{l_{cd}} $ is a union of granules (that are closed under directedness). 

$A^{u_{cd} l_{cd}} = A^{u_{cd}}$ follows from the fact that $A^{u_{cd}} $ is a union of granules (that are closed under directedness), and the cd-lower approximation of such a set must be the same.

If $x\in A^{u_{cd}}$, then it is in a granule $H\in \mathcal{C}_R(S)$ and $H\subseteq A^{u_{cd}}$. But this means $x\in H \subseteq A^{u_{cd} u_{cd}}$. Therefore $A^{u_{cd}} \subseteq A^{u_{cd}u_{cd}}$ holds. In general, the converse inclusion will not hold. The proof of $A^{l_{cd}} \subseteq A^{l_{cd} u_{cd}}$ is similar.

If $X$ is an element of $\mathcal{C}_R(S)$, and $X\subseteq A\subseteq B$, then $X$ is a subset of the cd-lower approximation of $A$ and $B$. So \textsf{lcdmo} follows. 

If $X$ is an element of $\mathcal{C}_R(S)$, and $X\cap A\neq \emptyset$ and $A\subseteq B$, then $X\cap B\neq \emptyset$. Note that the restricted union $\Upsilon$ is at least as big for $B$. This ensures \textsf{ucdmo}.

If $x\in A^{l_{cd}} \cup B^{l_{cd}}$, then there exists a $X\in \mathcal{C}_R(S)$ such that $x\in X\subseteq A$ or $x\in X\subseteq B$. In either case $x\in X\subseteq A\cup B$ holds. Therefore  \textsf{lcdsadd} holds. 

If $x\in A^{u_{cd}} \cup B^{u_{cd}}$, then there exists a $X\in \mathcal{C}_R(S)$ such that $x\in X$ such that $X\cap A\neq \emptyset$ or $X\cap B\neq \emptyset$. In either case, $ X\cap (A\cup B)\neq \emptyset$ holds. Therefore  \textsf{ucdsadd} holds. 

If $x\in (A \cap B)^{l_{cd}}$, then there exists a $X\in \mathcal{C}_R(S)$ such that $x\in X\subseteq A\cap B$. This means $X\subseteq A$ and $X\subseteq B$. Therefore, \textsf{lcdsmul} holds. 

The rest of the proof is left to the reader.
\end{proof}

\begin{definition}\label{roucud}
In the above context, (denoting the boundary $A^{u_{cd}}\setminus A^{l_{cd}}$ of a subset $A$ by $A^{b_{cd}}$ ) by a \emph{cud-rough tuple} will be meant a 3-tuple of the form 
\[(A^{l_{cd}}, A^{u_{cd}}, A^{b_{cd}}) \]
The collections of all such cud-rough triples will be denoted by $\mathcal{CD}(S)$.
\end{definition}

\begin{definition}\label{cdss}
A subset $A\in\wp(S)$ will be said to be a \emph{cud-subset} of $B\in \wp(S)$ (in symbols, $A\sqsubseteq_{cd} B$) if and only if 
\begin{equation*}
A^{l_{cd}} \subseteq B^{l_{cd}} \,\&\,  A^{u_{cd}} \subseteq B^{u_{cd}}    \tag{cud-ss} 
\end{equation*}
Further, they will be said to be \emph{cud-roughly equal} (in symbols, $A\approx_{cd} B$) if they  satisfy the condition \textsf{cd-ro eq}::
\begin{equation*}
A^{l_{cd}} = B^{l_{cd}} \,\&\,  A^{u_{cd}} = B^{u_{cd}}    \tag{cd-ro eq} 
\end{equation*}
\end{definition}

It is easy to see that 
\begin{theorem}
In the context of definition \ref{cdss}, $\sqsubseteq_{cd}$ is a quasi order and $\approx_{cd}$ is an equivalence relation on $\wp(S)$. 
\end{theorem}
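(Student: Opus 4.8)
The plan is to verify directly that $\sqsubseteq_{cd}$ satisfies reflexivity and transitivity, and that $\approx_{cd}$ is reflexive, symmetric and transitive, working purely from Definition \ref{cdss} and basic set-theoretic facts. Since the relation $\sqsubseteq_{cd}$ is defined by a conjunction of two inclusions between the $l_{cd}$- and $u_{cd}$-approximations, the argument essentially reduces to the fact that ordinary set inclusion $\subseteq$ on $\wp(S)$ is itself a quasi order, and that a conjunction of two quasi orders (pulled back along the two maps $A\mapsto A^{l_{cd}}$ and $A\mapsto A^{u_{cd}}$) is again a quasi order.

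First I would observe that reflexivity of $\sqsubseteq_{cd}$ is immediate: for any $A\in\wp(S)$ we trivially have $A^{l_{cd}}\subseteq A^{l_{cd}}$ and $A^{u_{cd}}\subseteq A^{u_{cd}}$, so $A\sqsubseteq_{cd}A$. For transitivity, suppose $A\sqsubseteq_{cd}B$ and $B\sqsubseteq_{cd}C$. Then $A^{l_{cd}}\subseteq B^{l_{cd}}\subseteq C^{l_{cd}}$ and $A^{u_{cd}}\subseteq B^{u_{cd}}\subseteq C^{u_{cd}}$ by transitivity of $\subseteq$, hence $A\sqsubseteq_{cd}C$. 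This establishes that $\sqsubseteq_{cd}$ is a quasi order. (I would not claim antisymmetry; indeed two distinct subsets with the same pair of approximations give a counterexample, consistent with the theorem only asserting ``quasi order''.)

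Next, for $\approx_{cd}$: reflexivity is clear since $A^{l_{cd}}=A^{l_{cd}}$ and $A^{u_{cd}}=A^{u_{cd}}$. Symmetry follows because equality of sets is symmetric, so $A^{l_{cd}}=B^{l_{cd}}$ and $A^{u_{cd}}=B^{u_{cd}}$ is the same condition as $B^{l_{cd}}=A^{l_{cd}}$ and $B^{u_{cd}}=A^{u_{cd}}$. Transitivity follows from transitivity of set equality applied separately in the two coordinates. Alternatively, I would note that $A\approx_{cd}B$ holds exactly when $A\sqsubseteq_{cd}B$ and $B\sqsubseteq_{cd}A$, i.e. $\approx_{cd}$ is the symmetric part of the quasi order $\sqsubseteq_{cd}$, and the symmetric part of any quasi order is automatically an equivalence relation; this gives the second half of the theorem as a corollary of the first.

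There is essentially no obstacle here — the statement is a routine bookkeeping verification, which is presumably why the excerpt prefaces it with ``It is easy to see that''. The only mild subtlety worth a sentence is confirming that the defining conditions are well-posed, i.e. that $A^{l_{cd}}$ and $A^{u_{cd}}$ are genuinely defined for every $A\in\wp(S)$ (in a directed approximation space $S$, so that $\mathcal{C}_R(S)$ is nonempty and its union is $S$ by Theorem \ref{cdbas}); once that is in place, the quasi-order and equivalence-relation claims are immediate from the corresponding properties of $\subseteq$ and $=$ on $\wp(S)$.
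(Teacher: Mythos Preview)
Your proposal is correct and follows essentially the same approach as the paper: the paper's proof likewise notes reflexivity and then checks transitivity of $\sqsubseteq_{cd}$ by transitivity of $\subseteq$ in each coordinate, without further elaboration. Your treatment is in fact slightly more thorough, since you spell out the argument for $\approx_{cd}$ (and the observation that it is the symmetric part of $\sqsubseteq_{cd}$), whereas the paper leaves that portion implicit.
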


\begin{proof}
$\sqsubseteq_{cd}$ is reflexive. It is transitive because for any $A, B, C\in \wp(S)$,
$A^{l_{cd}} \subseteq B^{l_{cd}} \,\&\,  A^{u_{cd}} \subseteq B^{u_{cd}}$ and $B^{l_{cd}} \subseteq C^{l_{cd}} \,\&\,  B^{u_{cd}} \subseteq C^{u_{cd}} $ implies \[A^{l_{cd}} \subseteq C^{l_{cd}} \,\&\,  A^{u_{cd}} \subseteq C^{u_{cd}} \]
\end{proof}

\section{Pi-Groupoids}

The non-uniqueness of the groupoidal operation associated with a up-directed relation is addressed here. 

For a given $a, b\in S$ , let $U_R^m(a, b)$ be a maximal subset of $U_R(a, b)$ that satisfies
\[(\forall e, f\in U_R^m (a, b) \exists g\in U_R(a, b)\setminus U_R^m (a, b)) Reg \& Rfg\]
$U_R^m(a, b)$ will be called a set of \emph{pseudo joins} of $a$ and $b$. $U_R^m(a, b)$ is essentially the set of potential values of the join of the elements $a$ and $b$.

\begin{proposition}
If $R$ is a join semilattice order, then for every $a, b\in S$, $U_R^m(a, b)$ is a singleton. 
\end{proposition}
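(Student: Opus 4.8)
The plan is to identify $U_R(a,b)$ concretely under the hypothesis, observe that it has a unique minimal element, and then argue that the set of pseudo joins — ``the potential values of the join'' — can only be that element. First I would note that a join semilattice order is a partial order, so for any $a,b\in S$ the join $a\vee b$ exists and satisfies $R(a\vee b)x \longleftrightarrow (Rax\,\&\,Rbx)$. Hence $U_R(a,b)=\{x:\,Rax\,\&\,Rbx\}=\{x:\,R(a\vee b)x\}$ is the principal filter generated by $a\vee b$, and $a\vee b$ is its $R$-least, hence its unique $R$-minimal, element. The single structural fact I will lean on throughout is that $R(a\vee b)g$ holds for every $g\in U_R(a,b)$.

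Next I would show $U_R^m(a,b)=\{a\vee b\}$. For membership of $a\vee b$: the set $\{a\vee b\}$ satisfies the defining condition, since the only instance to test is $e=f=a\vee b$, and any $g\in U_R(a,b)\setminus\{a\vee b\}$ witnesses it because $R(a\vee b)g$ holds automatically; such a $g$ exists whenever $a\vee b$ is not $R$-maximal in $S$, and when it is $R$-maximal we have $U_R(a,b)=\{a\vee b\}$ and the conclusion is immediate. For maximality: no $c\neq a\vee b$ can be adjoined, because $R(a\vee b)c$ together with $c\neq a\vee b$ makes $c$ a strict $R$-successor of $a\vee b$, so $c$ is not a minimal element of $U_R(a,b)$ and is therefore not among the potential joins; I would make this precise by comparing any admissible set containing $c$ with the set obtained by replacing $c$ with $a\vee b$, using that $a\vee b$ lies $R$-below $c$ and below every element of $U_R(a,b)$ that the defining condition could supply as a common upper bound. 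The two halves together give $U_R^m(a,b)=\{a\vee b\}$, a singleton.

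Since $a$ and $b$ were arbitrary, $U_R^m(a,b)$ is a singleton for all $a,b\in S$, as claimed.

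The part I expect to be the main obstacle is not the semilattice computation — identifying $U_R(a,b)$ with a principal filter and spotting its unique minimum is routine — but reconciling this with the abstract ``maximal subset satisfying $\ldots$'' form of the definition of $U_R^m$: one has to argue carefully that $\{a\vee b\}$ cannot be enlarged (in particular ruling out admissible sets built along $R$-chains sitting above $a\vee b$) and to dispose of the degenerate case where $a\vee b$ is $R$-maximal in $S$. In each case the decisive point is the same: in a join semilattice $a\vee b$ sits $R$-below every member of $U_R(a,b)$, so any admissible collection of pseudo joins is ultimately controlled by $a\vee b$ alone.
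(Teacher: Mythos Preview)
The identification $U_R(a,b)=\{x:\,R(a\vee b)x\}$ as the principal filter at $a\vee b$ is correct, but your maximality step does not go through under the paper's literal definition of $U_R^m(a,b)$. Take the three-element chain $S=\{0<1<2\}$ (a join semilattice) with $a=b=0$, so $U_R(0,0)=\{0,1,2\}$. The subset $M=\{0,1\}$ satisfies the defining clause: for every pair $e,f\in M$ the element $g=2\in U_R(0,0)\setminus M$ is a common $R$-upper bound. Hence $\{0\}=\{a\vee b\}$ is \emph{not} maximal among subsets satisfying the condition, and in fact $\{0,1\}$ is a maximal such subset. Your ``replacement'' heuristic (swap $c$ for $a\vee b$) is not a maximality argument: what would have to be shown is that every strict superset of $\{a\vee b\}$ violates the condition, and the example shows this is false. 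The sentence ``$c$ is not a minimal element of $U_R(a,b)$ and is therefore not among the potential joins'' leans on the informal gloss (``potential values of the join'') rather than on the formal clause actually defining $U_R^m$.

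The degenerate case is likewise not disposed of: if $a\vee b$ is $R$-maximal then $U_R(a,b)=\{a\vee b\}$, and now $\{a\vee b\}$ itself \emph{fails} the defining clause (there is no $g$ in the empty complement $U_R(a,b)\setminus\{a\vee b\}$), so under the literal reading the only admissible subset is $\emptyset$, not a singleton. In summary, you have correctly proved that $a\vee b$ is the unique minimal element of $U_R(a,b)$, which is the intended semantic content of ``pseudo join''; what is missing is a bridge from that fact to the stated ``maximal subset'' formulation, and the chain example indicates that no such bridge exists without first clarifying or amending the definition of $U_R^m$. The paper gives no proof to compare against, so the issue lies entirely in reconciling the formal definition with its informal description.
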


The map $\pi$, defined below, is intended to be a choice function that selects a unique upper bound from the set of upper bounds of $a$ and $b$. 

\begin{definition}
A map $\pi: U_R(S) \longmapsto S$ will be called a \emph{upper choice function} if for all $a$ and $b$ in $S$ $\pi(U_R(a, b)) \in U_R(a, b)$ and a \emph{pseudo-join choice function} if it satisfies the condition $\pi(U_R(a, b)) \in U_R^m(a, b)$.
\end{definition}

\begin{definition}\label{pigrpd}
If a relational system is up-directed, then it corresponds to a \emph{Pi-groupoid} defined by (for a pseudo-join choice function $\pi$)
\[(\forall a, b )\, ab = \left\lbrace  \begin{array}{ll}
 b & \text{if } Rab\\
 \pi (U_R(a, b)) & \text{if } \neg Rab\\
 \end{array} \right. \tag{upidg} \]
\end{definition}

Pi-groupoids can be used to study many groupoidal operations or to algebraically model a specific perspective implicit in the generalities expressed by the up-directed relation. The functionality gain is extreme and described in the next subsection.

\subsection{Pi-Groupoidal Approximations}

Pi-groupoids generated by a directed approximation space are groupoids that can additionally be provided with nice granulations for algebraically computing approximations. For the granulation, it is natural to use subgroupoids and this additionally amounts to restricting the construction of CUD-approximations by special choices.

\begin{definition}\label{pigra}
On a pi-groupoid $S$, relative to the granulation $\mho = Su(S)$ (the algebraic lattice of subgroupoids of $S$), the \emph{pi-lower} ($l_\pi$) and \emph{pi-upper} ($u_\pi$) approximations of a subset $A\subseteq S$ will be as follows (the smallest subgroupoid of $S$ containing a set $A$ is $\mathbf{Sg}(A)$):
\begin{align*}
A^{l_\pi} = \bigcup \{X: X\in \mho\, \& \,X\subseteq A\}   \tag{lower-$\pi$}\\
A^{u_\pi} = \mathbf{Sg}(A)   \tag{upper-$\pi$}
\end{align*}
\end{definition}

The boundary of $A$ in the above definition will be taken to be $A^{b_\pi} = A^{u_\pi}\setminus A^{l_\pi}$. Because the granules are subgroupoids, it follows that 

\begin{proposition}
The pi-lower approximations are unions of subgroupoids, while the pi-upper approximations are subgroupoids. 
\end{proposition}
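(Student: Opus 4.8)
The plan is to derive both assertions immediately from Definition \ref{pigra}. First I would treat the pi-lower approximation: by definition $A^{l_\pi} = \bigcup\{X : X\in \mho \,\&\, X\subseteq A\}$, and every $X$ occurring in this union lies in $\mho = Su(S)$, hence is a subgroupoid of the pi-groupoid $S$. Thus $A^{l_\pi}$ is, verbatim, a union of subgroupoids; no closure argument is needed, and indeed none is available, since a union of subgroupoids is in general not closed under the groupoid operation (which is exactly why the statement is phrased this weakly). If $A$ contains no nonempty subgroupoid the union degenerates to $\emptyset$; under the usual convention that the empty set is a (vacuously closed) subgroupoid, this is still a union of members of $\mho$, so the claim survives the edge case.

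Next I would handle the pi-upper approximation. Here the only thing to verify is that $\mathbf{Sg}(A)$, the smallest subgroupoid of $S$ containing $A$, is well defined: the family of subgroupoids of $S$ that contain $A$ is nonempty (it contains $S$ itself) and is closed under arbitrary intersection, so its intersection is again a subgroupoid, contains $A$, and is included in every subgroupoid that contains $A$ --- hence it is the least such and coincides with $\mathbf{Sg}(A)$. Since $A^{u_\pi} = \mathbf{Sg}(A)$ by Definition \ref{pigra}, $A^{u_\pi}$ is a subgroupoid.

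I do not expect any real obstacle here; the one point deserving a line of care is precisely the well-definedness of $\mathbf{Sg}$, i.e.\ the closure of the class of $A$-containing subgroupoids under intersection, which is the standard closure-system argument. Everything else is a matter of unwinding the definitions of $\mho$, $\mathbf{Sg}$, and the pi-approximations, so I would keep the written proof to just a few lines.
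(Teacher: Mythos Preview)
Your proposal is correct and matches the paper's approach exactly: the paper treats this proposition as an immediate consequence of the definition (``Because the granules are subgroupoids, it follows that \ldots''), and you simply unwind Definition~\ref{pigra} in the same way, adding the standard closure-system justification for $\mathbf{Sg}$. If anything you are more explicit than the paper, which states no proof at all.
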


Pi-approximations satisfy substantially better properties in comparison to cd-approximations. 

\begin{theorem}\label{pi9}
Pi-lower and pi-upper approximations satisfy the following (for any two subsets $A$ and $B$ of a pi-groupoid $S$
\begin{align*}
A^{l_{\pi}}\subseteq A \subseteq A^{u_{\pi}}   \tag{piInclusion}\\
A^{l_{\pi} l_{\pi}} = A^{l_{\pi}}   \tag{lpiId}\\
A^{u_{\pi}} = A^{u_{\pi} u_{\pi}}    \tag{upipId}\\
A^{l_{\pi}} = A^{l_{\pi} u_{\pi}}    \tag{lupipId}\\
A^{u_{\pi} l_{\pi}} = A^{u_{\pi}}   \tag{ulpiId}\\
(A\subseteq B \longrightarrow A^{l_{\pi}} \subseteq B^{l_{\pi}})   \tag{lpimo}\\
(A\subseteq B \longrightarrow A^{u_{\pi}} \subseteq B^{u_{\pi}})   \tag{upimo}\\
A^{l_{\pi}} \cup B^{l_{\pi}} \subseteq (A\cup B)^{l_{\pi}}   \tag{lpisadd}\\
A^{u_{\pi}} \cup B^{u_{\pi}} \subseteq (A\cup B)^{u_{\pi}}   \tag{upisadd}\\
(A\cap B)^{l_{\pi}} \subseteq A^{l_{\pi}} \cap B^{l_{\pi}}   \tag{lpismul}\\
(A\cap B)^{u_{\pi}} \subseteq A^{u_{\pi}} \cap B^{u_{\pi}}   \tag{upismul}\\
{\emptyset}^{l_{\pi}} = \emptyset = {\emptyset}^{u_{\pi}}   \tag{pibottom}\\
S^{l_{\pi}} = S = S^{u_{\pi}} \tag{pitop}
\end{align*}
\end{theorem}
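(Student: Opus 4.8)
The plan is to base everything on two observations: that $\mathbf{Sg}(\cdot)$ is a closure operator on $\wp(S)$ — extensive, isotone and idempotent, with $\mathbf{Sg}(\emptyset)=\emptyset$ and $\mathbf{Sg}(S)=S$, since the subgroupoids form a closure system (the algebraic lattice $\mho = Su(S)$) — and that $A^{l_\pi}$ is, by construction, nothing but the union of those members of $\mho$ lying below $A$. With these two facts available, most of the thirteen conditions become routine set-theoretic bookkeeping, and I would dispatch them in that order.

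For \emph{piInclusion}: $A^{l_\pi}\subseteq A$ holds because it is a union of subsets of $A$, and $A\subseteq\mathbf{Sg}(A)=A^{u_\pi}$ is extensivity. For \emph{lpimo} and \emph{upimo}: if $A\subseteq B$, each $X\in\mho$ with $X\subseteq A$ also satisfies $X\subseteq B$, so $A^{l_\pi}\subseteq B^{l_\pi}$, and $A^{u_\pi}=\mathbf{Sg}(A)\subseteq\mathbf{Sg}(B)=B^{u_\pi}$ by isotonicity. The four union- and intersection-type inequalities (\emph{lpisadd}, \emph{upisadd}, \emph{lpismul}, \emph{upismul}) follow immediately by feeding the inclusions $A,B\subseteq A\cup B$ and $A\cap B\subseteq A,B$ into monotonicity (and, for the two union cases, taking the union of the resulting inclusions). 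For \emph{pibottom} and \emph{pitop}: $\emptyset$ and $S$ are themselves subgroupoids, hence their own pi-lower and pi-upper approximations. For \emph{lpiId}: since $A^{l_\pi}\subseteq A$, any $X\in\mho$ with $X\subseteq A^{l_\pi}$ also satisfies $X\subseteq A$, giving $A^{l_\pi l_\pi}\subseteq A^{l_\pi}$; conversely every $X\in\mho$ with $X\subseteq A$ already satisfies $X\subseteq A^{l_\pi}$ (it is one of the summands), so it is a subgroupoid below $A^{l_\pi}$, hence below $A^{l_\pi l_\pi}$, and taking the union over all such $X$ yields $A^{l_\pi}\subseteq A^{l_\pi l_\pi}$. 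For \emph{upipId}: $\mathbf{Sg}\mathbf{Sg}(A)=\mathbf{Sg}(A)$ because $\mathbf{Sg}(A)\in\mho$. For \emph{ulpiId}: $A^{u_\pi}=\mathbf{Sg}(A)$ is a subgroupoid contained in itself, so it occurs among the sets whose union is $(A^{u_\pi})^{l_\pi}$; together with $(A^{u_\pi})^{l_\pi}\subseteq A^{u_\pi}$ this forces $A^{u_\pi l_\pi}=A^{u_\pi}$.

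The one genuinely delicate item, and the step I expect to be the main obstacle, is \emph{lupipId}: $A^{l_\pi}=A^{l_\pi u_\pi}=\mathbf{Sg}(A^{l_\pi})$. The inclusion $A^{l_\pi}\subseteq\mathbf{Sg}(A^{l_\pi})$ is once more extensivity, but the reverse inclusion is exactly the assertion that $A^{l_\pi}$ is closed under the groupoid operation — equivalently, that the family $\{X\in\mho:\,X\subseteq A\}$ is directed, so that its union is again a subgroupoid, since an arbitrary union of subgroupoids need not be one. Here I would exploit the concrete form of the pi-operation: for $x,y\in A^{l_\pi}$ one has $xy=y$ when $Rxy$ (and then $xy\in A^{l_\pi}$ trivially), while $xy=\pi(U_R(x,y))\in U_R^m(x,y)$ otherwise, so the problem reduces to showing that this one chosen pseudo-join of $x$ and $y$ can always be housed in a subgroupoid still contained in $A$. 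Without a further restriction this can genuinely fail, so I anticipate the argument either invoking a hypothesis that makes $\{X\in\mho:\,X\subseteq A\}$ directed under products (e.g.\ confining attention to subsets $A$ whose relevant pseudo-joins remain inside $A$, or to pi-groupoids with a sufficiently well-behaved subgroupoid lattice), or else reading $A^{l_\pi}$ as the largest subgroupoid contained in $A$ whenever such exists. This is where the real content of the theorem sits; the remaining twelve items are formal consequences of closure-operator bookkeeping.
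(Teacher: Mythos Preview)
Your approach---reduce everything to the fact that $\mathbf{Sg}$ is an algebraic closure operator and that $A^{l_\pi}$ is the union of the subgroupoids below $A$---is exactly the paper's approach, only carried out in far greater detail. The paper's proof is a two-sentence sketch: it says the result ``is dictated by the algebraic closure operator $\mathbf{Sg}$'', points back to the argument for Theorem~\ref{cdbas}, and then claims \textsf{upipId}, \textsf{lupipId}, \textsf{ulpiId} all follow because ``the subgroupoid generated by a subgroupoid is the same subgroupoid''. For twelve of the thirteen items your write-up and the paper's are in complete agreement, with yours simply spelling out what the paper leaves implicit.

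Your hesitation over \textsf{lupipId} is well placed, and in fact sharper than the paper's own treatment. The paper's one-line justification tacitly assumes that $A^{l_\pi}$ is already a subgroupoid, which is precisely the point at issue; a union of subgroupoids need not be one. You are right that without some further hypothesis the equality $A^{l_\pi}=\mathbf{Sg}(A^{l_\pi})$ can fail. The paper itself concedes this a few lines later: Theorem~\ref{sappr} records only the inclusion $A^{l_\pi}\subseteq\mathbf{Sg}(A^{l_\pi})$, and the remark following it states that ``in general, $\mathbf{Sg}(A^{l_\pi})$ is not a subset of $A$'', which is incompatible with $\mathbf{Sg}(A^{l_\pi})=A^{l_\pi}\subseteq A$. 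So the discrepancy you flagged is an internal inconsistency in the paper rather than a gap in your argument; your proposed resolutions (restrict to $A$ for which the family $\{X\in\mho:X\subseteq A\}$ is directed, or read $l_\pi$ as the interior operator for $\mho$) are reasonable ways to salvage the claim, but the paper supplies none of them.
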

\begin{proof}
 The proof of this theorem is dictated by the algebraic closure operator $\mathbf{Sg}$, though the points in theorem \ref{cdbas} applies to these cases as well.
 
 For example, the subgroupoid generated by a subgroupoid is the same subgroupoid. So properties \textsf{upipId, luipId}, and \textsf{ulpiId} hold.
 
 The properties \textsf{upimo, upisadd} and \textsf{upismul} are essentially about subgroupoids generated by a subset of a groupoid through the algebraic closure operator $\mathbf{Sg}$.
\end{proof}

The approximations have nice modal properties, though they fail in relation to set operations.

\begin{theorem}\label{sappr}
In the above context,
\[ A^{l_\pi} \subseteq \mathbf{Sg}(A^{l_\pi}) \subseteq \mathbf{Sg}(A) = A^{u_\pi} \]
\end{theorem}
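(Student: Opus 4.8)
The plan is to read the displayed chain as three separate links, each of which is a purely formal consequence of the definitions in Definition \ref{pigra} together with the abstract properties of the algebraic closure operator $\mathbf{Sg}$. The leftmost inclusion $A^{l_\pi} \subseteq \mathbf{Sg}(A^{l_\pi})$ is just the extensivity (inclusion) axiom of $\mathbf{Sg}$: every subset $T$ of the pi-groupoid $S$ satisfies $T \subseteq \mathbf{Sg}(T)$, so it suffices to instantiate this with $T = A^{l_\pi}$. No up-directedness or choice-function input is needed for this step.

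For the middle inclusion $\mathbf{Sg}(A^{l_\pi}) \subseteq \mathbf{Sg}(A)$, I would first record that $A^{l_\pi} \subseteq A$. This is immediate from the defining clause (lower-$\pi$), since $A^{l_\pi}$ is a union of subgroupoids each of which is contained in $A$; it is also the content of \textsf{piInclusion} in Theorem \ref{pi9}. Next I would invoke monotonicity of $\mathbf{Sg}$: whenever $C \subseteq D \subseteq S$ one has $\mathbf{Sg}(C) \subseteq \mathbf{Sg}(D)$, because $\mathbf{Sg}(D)$ is a subgroupoid containing $C$ while $\mathbf{Sg}(C)$ is the least subgroupoid containing $C$. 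Applying this with $C = A^{l_\pi}$ and $D = A$ yields the middle inclusion. Finally, the equality $\mathbf{Sg}(A) = A^{u_\pi}$ is literally the defining equation (upper-$\pi$) of Definition \ref{pigra}. Chaining the three links gives $A^{l_\pi} \subseteq \mathbf{Sg}(A^{l_\pi}) \subseteq \mathbf{Sg}(A) = A^{u_\pi}$.

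There is no genuine obstacle here; the only point requiring care is to cite the correct abstract property of $\mathbf{Sg}$ at each link — extensivity for the first, monotonicity for the second — rather than, say, idempotence. The substance of the statement is the observation that the subgroupoid \emph{generated by} the pi-lower approximation is a meaningful intermediate object wedged between $A^{l_\pi}$ and $A^{u_\pi}$, which is exactly what supports the modal-style reading of the pair $(l_\pi, u_\pi)$; the proof itself is a three-line bookkeeping exercise once the operators are in place.
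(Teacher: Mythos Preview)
Your proposal is correct and follows essentially the same approach as the paper: extensivity of $\mathbf{Sg}$ for the first inclusion, then $A^{l_\pi}\subseteq A$ together with monotonicity of the algebraic closure operator $\mathbf{Sg}$ for the second, and the definition of $A^{u_\pi}$ for the final equality. Your write-up is simply more explicit about which closure-operator axiom is invoked at each link.
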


\begin{proof}
$A^{l_\pi}$ is a subset of the subgroupoid generated by itself. Therefore $A^{l_\pi} \subseteq \mathbf{Sg}(A^{l_\pi})$. $\mathbf{Sg}(A^{l_\pi}) \subseteq \mathbf{Sg}(A) $ holds because $Sg$ is an algebraic closure operator on $\wp(S)$. 
\end{proof}

In general, $\mathbf{Sg}(A^{l_\pi}$ is not a subset of $A$. So the lower approximation cannot be replaced by it.

All of the above suggests the next definition.
\begin{definition}\label{roupi}
In the above context, by a \emph{pi-groupoidal rough tuple} (pg-rough tuple) will be meant a 3-tuple of the form 
\[(A^{l_\pi}, \mathbf{Sg}(A^{l_\pi}), A^{u_\pi})\]
An \emph{algebraically constructed pi-groupoidal rough tuple} (acpg-rough tuple ) will be pairs of the form  
\[( \mathbf{Sg}(A^{l_\pi}), A^{u_\pi} ))\]
The collections of all such pg-rough and acpg-rough tuples will respectively be denoted by 
$\mathcal{PG}(S)$ and $\mathcal{ACP}(S)$.
\end{definition}

\begin{definition}
Two subsets $A, B\in \wp(S)$ will be said to be \emph{pg-roughly equal} (in symbols, $A\approx_{pg} B$) or \emph{acpg-roughly equal}) ($A\approx_{acpg} B$) as they respectively satisfy the conditions \textsf{pg-ro eq} or \textsf{acpg-ro} respectively:
\begin{align*}
(A^{l_\pi}, \mathbf{Sg}(A^{l_\pi}), A^{u_\pi}) = (B^{l_\pi}, \mathbf{Sg}(B^{l_\pi}), B^{u_\pi})    \tag{pg-ro eq}\\
(\mathbf{Sg}(A^{l_\pi}), A^{u_\pi}) = (\mathbf{Sg}(B^{l_\pi}), B^{u_\pi})    \tag{acpg-ro eq}
\end{align*}
\end{definition}

\begin{proposition}
Both $\approx_{pg}$ and $\approx_{acpg}$ are equivalence relations on $\wp(S)$, and \[\approx_{pg} \subseteq \approx_{acpg}  \]  
\end{proposition}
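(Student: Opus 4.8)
The plan is to prove the two parts of the proposition separately: first that each of $\approx_{pg}$ and $\approx_{acpg}$ is an equivalence relation, and then that $\approx_{pg}$ refines $\approx_{acpg}$.

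First I would observe that both relations are defined as pullbacks of equality under a map on $\wp(S)$: namely $A \mapsto (A^{l_\pi}, \mathbf{Sg}(A^{l_\pi}), A^{u_\pi})$ for $\approx_{pg}$, and $A \mapsto (\mathbf{Sg}(A^{l_\pi}), A^{u_\pi})$ for $\approx_{acpg}$. Since equality on any set (here, the set of triples of subsets of $S$, respectively pairs of subsets of $S$) is reflexive, symmetric and transitive, the pullback of equality along any function is automatically an equivalence relation. So reflexivity, symmetry and transitivity of both $\approx_{pg}$ and $\approx_{acpg}$ are immediate; one simply checks that $A^{l_\pi}=A^{l_\pi}$, $\mathbf{Sg}(A^{l_\pi})=\mathbf{Sg}(A^{l_\pi})$, $A^{u_\pi}=A^{u_\pi}$, and chains the three componentwise equalities for transitivity.

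For the inclusion $\approx_{pg}\, \subseteq\, \approx_{acpg}$, I would argue directly: suppose $A\approx_{pg} B$. Then by definition $A^{l_\pi}=B^{l_\pi}$, $\mathbf{Sg}(A^{l_\pi})=\mathbf{Sg}(B^{l_\pi})$, and $A^{u_\pi}=B^{u_\pi}$. The second and third of these equalities are exactly the two components required for $A\approx_{acpg} B$, so $A\approx_{acpg} B$ holds. (In fact the first equality $A^{l_\pi}=B^{l_\pi}$ already forces $\mathbf{Sg}(A^{l_\pi})=\mathbf{Sg}(B^{l_\pi})$ since $\mathbf{Sg}$ is a function, but we do not even need this, since $\mathbf{Sg}(A^{l_\pi})=\mathbf{Sg}(B^{l_\pi})$ is part of the hypothesis.) Hence every $\approx_{pg}$-class is contained in an $\approx_{acpg}$-class, i.e. $\approx_{pg}\,\subseteq\,\approx_{acpg}$ as relations.

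There is essentially no obstacle here; the statement is a formal consequence of the definitions and of $\mathbf{Sg}$ being well-defined (which rests on Theorem~\ref{pi9} and the fact that $\mho = Su(S)$ is an algebraic lattice, already established). The only thing worth a sentence in the writeup is noting that $A^{l_\pi}$, $\mathbf{Sg}(A^{l_\pi})$ and $A^{u_\pi}$ are genuinely well-defined for every $A\in\wp(S)$ — the pi-lower approximation as a union of subgroupoids contained in $A$ (possibly empty, giving $\emptyset$, consistent with \textsf{pibottom}), and $\mathbf{Sg}$ applied to any subset via the algebraic closure operator — so that the two defining maps are total on $\wp(S)$ and the pullback argument applies without case distinctions.
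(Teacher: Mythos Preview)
Your argument is correct. The paper does not supply a proof for this proposition at all; it is stated and immediately followed by an open problem, so the author evidently regards it as self-evident from the definitions. Your write-up simply makes explicit what is implicit there: both relations are kernels of well-defined maps into tuple spaces, hence equivalence relations, and the inclusion is immediate because the acpg-tuple is a projection of the pg-tuple. Your parenthetical remark that $A^{l_\pi}=B^{l_\pi}$ already forces $\mathbf{Sg}(A^{l_\pi})=\mathbf{Sg}(B^{l_\pi})$ is a nice observation (it shows the middle component in the pg-triple is redundant for the equivalence), though as you say it is not needed for the argument.
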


\begin{paragraph}{Problem} 
Algebraic systems definable over $\mathcal{PG}(S)$, $\mathcal{ACP}(S)$, $\wp(S)|\approx_{pg}$, and 
$\wp(S)|\approx_{acpg}$ are all of natural interest. What are the best possible ones? 
\end{paragraph}

$\mathcal{ACP}(S)$ is obviously a subset of $(Su(S))^2$. So the algebraic lattice order on the latter induces an order on the former. This is explored in section \ref{pialg1}. 

The upper approximation $u_\pi$ is arguably not the best, and a good contender is defined next. 

\begin{definition}
In the context of definition \ref{pigra}, define the \emph{anti-lower upper approximation} $u_a$ of a proper subset $A$ of $S$ by (the anti-lower upper approximation of $S$ will be taken to be $S$)
\begin{equation*}
A^{u_a} =  \Upsilon{H: A\subset H \,\&\, H\in Su(S)} \tag{a-upper}
\end{equation*}
\end{definition}

It is easy to see that repeated a-upper approximations will properly contain the set being approximated unless it is $S$. In fact, it is provable that

\begin{theorem}\label{aup}
In the context of theorem \ref{pi9}, all of the following hold (for any two subsets $A$ and $B$ of a pi-groupoid $S$):
\begin{align*}
A^{l_{\pi}}\subseteq A \subseteq A^{u_{\pi}}\subseteq A^{u_a}   \tag{+piInclusion}\\
A^{u_{a}} \subseteq A^{u_{a} u_{a}}    \tag{uapIn}\\
A^{l_{\pi}} \subseteq A^{l_{\pi} u_{a}}    \tag{luaIn}\\
A^{u_{a} l_{\pi}} = A^{u_{a}}   \tag{ulaId}\\
(A\subseteq B \longrightarrow A^{u_{a}} \subseteq B^{u_{a}})   \tag{uamo}\\
A^{u_{a}} \cup B^{u_{a}} \subseteq (A\cup B)^{u_{a}}   \tag{uaadd}\\
{\emptyset}^{l_{\pi}} = \emptyset \subseteq {\emptyset}^{u_{a}}   \tag{abottom}\\
S = S^{u_{a}} \tag{atop}
\end{align*}
\end{theorem}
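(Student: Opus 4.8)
The plan is to treat the eight assertions in increasing order of difficulty, using Theorem~\ref{pi9}, the fact that $\mathbf{Sg}$ is an algebraic closure operator on $\wp(S)$, and a single structural description of the new operator. That description is: for a proper subset $A\subsetneq S$, every subgroupoid that properly contains $A$ already contains $\mathbf{Sg}(A)$; hence if $A$ is \emph{not} a subgroupoid then $\mathbf{Sg}(A)$ is the least, and so the unique minimal, member of $\{H\in Su(S):\, A\subset H\}$, giving $A^{u_a}=\mathbf{Sg}(A)=A^{u_\pi}$, whereas if $A$ \emph{is} a subgroupoid (and $A\neq S$) the minimal members of that family are exactly the subgroupoids that cover $A$ in $Su(S)$, so $A^{u_a}$ is the union of the covers of $A$. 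In either case $A^{u_a}$ is a union of subgroupoids, and $\mathbf{Sg}(A)\subseteq A^{u_a}$.

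With this in hand the routine items follow quickly. \textsf{+piInclusion} combines \textsf{piInclusion} of Theorem~\ref{pi9} with $A^{u_\pi}=\mathbf{Sg}(A)\subseteq A^{u_a}$ (the convention $S^{u_a}=S$ covering the case $A=S$); \textsf{atop} is that convention, and \textsf{abottom} is \textsf{pibottom} together with the triviality $\emptyset\subseteq\emptyset^{u_a}$. For \textsf{uapIn}: if $A^{u_a}=S$ then $A^{u_au_a}=S$ by \textsf{atop}; otherwise $A^{u_au_a}$ is a restricted union over the non-empty family of minimal subgroupoids properly containing $A^{u_a}$, each of which contains $A^{u_a}$, so $A^{u_a}\subseteq A^{u_au_a}$. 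The identical two-case argument with $A^{l_\pi}$ in place of $A^{u_a}$ gives \textsf{luaIn}. For \textsf{ulaId}, since $A^{u_a}$ is a union of subgroupoids, each of those subgroupoids lies in $\mho$ and is contained in $A^{u_a}$, hence contributes to $A^{u_al_\pi}$; this forces $A^{u_a}\subseteq A^{u_al_\pi}$, and the reverse inclusion is automatic from \textsf{piInclusion}, so equality holds.

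The crux is \textsf{uamo}, and \textsf{uaadd} will follow from it by applying \textsf{uamo} to $A\subseteq A\cup B$ and to $B\subseteq A\cup B$ and taking the union. For \textsf{uamo}, suppose $A\subseteq B$. If $B=S$ the inclusion is trivial. If $A$ is not a subgroupoid then $A^{u_a}=\mathbf{Sg}(A)\subseteq\mathbf{Sg}(B)$, and $\mathbf{Sg}(B)\subseteq B^{u_a}$ (it equals $B^{u_a}$ when $B$ is not a subgroupoid, and is contained in $B=B^{u_\pi}\subseteq B^{u_a}$ when $B$ is a subgroupoid), so the inclusion holds. The genuinely delicate case is $A$ a subgroupoid with $A\subsetneq B\subsetneq S$, where $A^{u_a}$ is the union of the covers of $A$ and one must show each such cover is absorbed into $B^{u_a}$. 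This is the step I expect to be the main obstacle: in an abstract algebraic lattice the union of the covers of a smaller element need not lie below the union of the covers of a larger one (and, relatedly, the restricted union $\Upsilon$ is not in general monotone in its defining family), so the argument must exploit the concrete description of the subgroupoids of a pi-groupoid — the subsets closed under the operation $ab=\pi(U_R(a,b))$, i.e.\ closed under the chosen pseudo-joins — to rule out an element of a cover of $A$ being stranded outside every cover of $B$. Concretely I would fix $x$ in a cover $H=\mathbf{Sg}(A\cup\{x\})$ of $A$: when $x\in B$ we are done since $B\subseteq B^{u_a}$, and otherwise I would analyse $\mathbf{Sg}(B\cup\{x\})$ together with the $\pi$-closure of $B$ to locate a cover of $B$ passing through $x$; if this cannot be pushed through in full generality one should restrict to finite $S$, where $Su(S)$ is well-founded and the restricted unions are non-vacuous, and in any case the $\pi$-closure structure, rather than lattice generalities, has to carry the argument.
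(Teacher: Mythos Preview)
The paper gives no proof of Theorem~\ref{aup}; it merely asserts that the listed properties are provable. So there is nothing to compare your attempt against, and the relevant question is whether your argument stands on its own.

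Your structural description of $u_a$ is correct and clean: for $A\subsetneq S$ not a subgroupoid one has $A^{u_a}=\mathbf{Sg}(A)=A^{u_\pi}$, while for $A$ a proper subgroupoid $A^{u_a}$ is the union of the covers of $A$ in $Su(S)$. From this the verifications of \textsf{+piInclusion}, \textsf{uapIn}, \textsf{luaIn}, \textsf{ulaId}, \textsf{abottom}, and \textsf{atop} go through exactly as you wrote them (modulo the usual caveat, which you note, that in infinite $S$ one needs covers to exist for the restricted union $\Upsilon$ to be nonvacuous).

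Your instinct that \textsf{uamo} is the crux, and that the restricted union $\Upsilon$ is not monotone in its defining family, is exactly right --- and in fact the obstacle is fatal rather than merely delicate. The property \textsf{uamo} is \emph{false} as stated, already for finite pi-groupoids. Take $R$ to be the partial order on the four-element diamond $\{0,a,b,1\}$ with $0<a,b<1$; this is up-directed, and the associated pi-groupoid operation is the join (cf.\ the paper's own Proposition that $U_R^m$ is a singleton for semilattice orders). The subgroupoids are the sub-join-semilattices. With $A=\{0\}$ the covers of $A$ are $\{0,a\}$, $\{0,b\}$, $\{0,1\}$, so $A^{u_a}=S$. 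With $B=\{0,a\}\supseteq A$ the unique cover is $\{0,a,1\}$, so $B^{u_a}=\{0,a,1\}$. Thus $A\subseteq B$ yet $A^{u_a}\not\subseteq B^{u_a}$. The same pair refutes \textsf{uaadd} (take the second set to be $\{a\}$, say). Your proposed rescue --- restricting to finite $S$ --- therefore does not help, and no amount of ``$\pi$-closure structure'' will repair the step, because the counterexample already lives inside that structure.

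In short: your handling of six of the eight items is correct and more detailed than anything the paper provides; your hesitation over \textsf{uamo} (and hence \textsf{uaadd}) is not a gap in your proof but a genuine defect in the theorem as stated.
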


\section{Abstract Example}\label{absexample}

An abstract example is constructed in this section for illustrating various aspects of up-directed approximation spaces. 

Let $\underline{S}$ be the set \[\underline{S} \,=\, \{a, b, c, e, f\} \text{ and let }\]
\[R \, =\, \{ac, bc, cc, af, ff, bf, ef, ca, cb, eb, cf, ea, fa, fb \}\] be a binary relation on it ($ac$ \textsf{means the ordered pair} $(a, c)$ and so on for other elements). In Figure \ref{udr}, the general approximation space $S = \left\langle \underline{S}, R \right\rangle$ is depicted. An arrow from $e$ to $f$ is drawn because $Ref$ holds. $\mathbf{c}$ and $\mathbf{f}$ are in bold because they are also related to themselves.

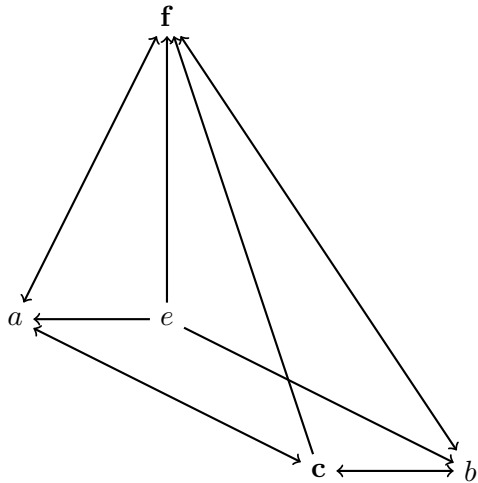
\begin{figure}[hbt]
\centering
\begin{tikzpicture}[node distance=2cm, auto]
\node (F) {$\mathbf{f}$};
\node (O) [below of=F] {};
\node (E) [below of=O] {$e$};
\node (K) [below of=E] {};
\node (C) [right of=K] {$\mathbf{c}$};
\node (A) [left of=E] {$a$};
\node (B) [right of=C] {$b$};
\draw[->,font=\scriptsize,thick] (E) to node {}(F);
\draw[->,font=\scriptsize,thick] (E) to node {}(B);
\draw[->,font=\scriptsize,thick] (E) to node {}(A);
\draw[<->,font=\scriptsize,thick] (F) to node {}(A);
\draw[->,font=\scriptsize,thick] (C) to node {}(F);
\draw[<->,font=\scriptsize,thick] (B) to node {}(F);
\draw[<->,font=\scriptsize,thick] (A) to node {}(C);
\draw[<->,font=\scriptsize,thick] (B) to node {}(C);
\end{tikzpicture}
\caption{Up-Directed Relation $R$}\label{udr}
\end{figure}

The up-directed approximation space $S = \left\langle \underline{S}, R \right\rangle$ is not reflexive and $R$ is not antisymmetric. The set of $R$-upper bounds for pairs of elements are 
given in the symmetric table \ref{ubs}:
\begin{table}[h]
\centering
\caption{Upper Bounds of Pairs}\label{ubs}
\begin{tabular}{|c|c c c c c|}
\toprule
 & a & b & c & e & f \\
\midrule
a & \{c, f\} & \{c,f\} & \{c,f\} & \{f \}& \{f\} \\
b & \{c, f\} & \{c, f\} & \{c\} & \{f\} & \{f\} \\ 
c & \{c, f\} & \{c\} & \{a, b, c, f\} & \{a, f\} & \{a, b, f\} \\
e & \{f\} & \{f\} & \{a, f\} & \{a, b, f\} & \{a, b, f\} \\
f & \{ f\} & \{f\} & \{a, b, f\} & \{a, b, f\} & \{a, b, f\} \\
\bottomrule
\end{tabular}
\end{table}

A groupoid corresponding to $S$ is given by Table \ref{grps} (hint for reading: row names multiply column names)
ac, bc, cc, af, ff, bf, ef, ca, cb, eb, cf, ea, fa, fb
\begin{table}[h]
\centering
\caption{A Groupoid of $S$}\label{grps}
\begin{tabular}{|c|c c c c c|}
\toprule
$\nearrow$ & a & b & c & e & f \\
\midrule
a & c & f & c & f & f \\
b & f & f & c & f & f \\ 
c & a & b & c & a & f \\
e & a & b & a & b & f \\
f & a & b & f & a & f \\
\bottomrule
\end{tabular} 
\end{table}

The subgroupoids of this groupoid are $\emptyset$, $\{c\}$, $\{f\}$, $\{a, c\}$, $\{b, f\}$, $\{c, f\}$, $\{e, f, b\}$, $\{a, c, f \}$,  $\{b, c, f\}$, $\{a,b, c, f\}$ and $S$.

The neighborhood granules determined by the elements of $S$ are as in Table \ref{nbdg}. 

\begin{table}[h]
\centering
\caption{Neighborhood Granules}\label{nbdg}
\begin{tabular}{| c | c | c | c| c| c|}
\toprule
$x $ & $ a $ & $ b $ & $ c $ & $ e $ & $ f $\\
\midrule
$[x]$ & $ \{ e, f\} $ &  $  \{c, e, f\} $ &  $\{a, b, c \} $ &
$ \emptyset $ &  $ S$ \\
\bottomrule
\end{tabular}
\end{table}

The set of subsets of $S$ closed under up-directedness is 
\begin{multline}
\mathcal{G} =\{\{c\}, \{f\}, \{a, c\}, \{b, c\},\{f,c\},\{b, f\}, \{e,f\}, \{a, f\},\\
\{a, c, f\},\{b, c, f\}, \{c, e, f\}, \{b, e, f \},\{a, e, f\}, \{a, b, f\},\{a, b, c\},\\
\{a,b, c,f\}, \{a, b, e, f\}, \{a, c, e, f\},\{b, c, e, f\}, S \} 
\end{multline}

Since $\wp(S)$ has $32$ elements, approximations of specific subsets are alone considered next.

Let $A =\{e, b, c\} $, then its approximations are as below:
\begin{align*}
A^l = \emptyset \text{ and } A^u = S \tag{nbd}\\
A^{l_{cd}} = \{b, c\} \text{ and } A^{u_{cd}} = \{b, c, e, f\} \tag{cd}\\
A^{l_\pi} = \{c\} \text{ and } A^{u_\pi} = S \tag{pi}
\end{align*}

Note however that if $B= \{b\}$, then $B^{l_\pi} = \emptyset$, $B^{u_\pi} = \{b, c, f\}= B^{u_a}$, $A^{u_a}= S$. and 
$B^{u_a} = \{\}$

\section{Pi-Groupoidal Algebraic Semantics}\label{pialg1}

Universal algebraic properties motivate the following approach. If $A$ and $B$ are two subsets of $S$, then let $A*B = \cup\{H: H\subseteq A\cap B \, \&\, H\in Su(S)\}$, and $A^{\flat}= \cup \{B:\, B\in Su(S) B\subseteq S\setminus A \}$

\begin{definition}\label{acpop}
On $\mathcal{ACP}(S)$, let the operations $\sqcup$, $\sqcap$, $\Cap$, $\Cup$, and $\neg$ be defined as follows for any $A, B\in\mathcal{ACP}(S) $ ($\wedge$ and $\vee$ are the lattice operations on $Su(S)$, and the subscripts indicate the first and second components)

\begin{align*}
A\sqcup B = (\mathbf{Sg}(A_1 \cup B_1), A_2\vee B_2))   \tag{acp-sum}\\
A \sqcap B = (\mathbf{Sg}(A_1 *B_1), A_2\wedge B_2 )   \tag{acp-pro}\\
\neg A = (\mathbf{Sg}(A_2^{\flat}), \mathbf{Sg}(A_1^{\flat}) )   \tag{acp-neg}\\
\coprod (A) =(A_1^{u_\pi}, A_2^{u_\pi})    \tag{acp-u}
\end{align*}
The algebra $\left\langle \mathcal{ACP}(S), \sqcup, \sqcap, \neg, \bot, \top  \right\rangle$ with $\bot = (\emptyset, \emptyset)$ and $\top = (S, S)$ will be called the \emph{ACP-rough algebra} over $S$. It will be endowed with the induced order given by 
\[(A_1, A_2) \unlhd (B_1, B_2) \leftrightarrow A_1 \subseteq B_1 \& A_2\subseteq B_2\]
 
\end{definition}

Note that the last defining condition in the above actually requires that $A_1$ be a subgroupoid of $B_1$ and $A_2$ be a subgroupoid of $B_2$.

\begin{proposition}
In the context of definition \ref{acpop}, all of the operations are well-defined.  
\end{proposition}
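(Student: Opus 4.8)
The plan is to verify that each of the five operations $\sqcup$, $\sqcap$, $\neg$, $\Cap$/$\Cup$ (the display only lists $\sqcup$, $\sqcap$, $\neg$, and $\coprod$, so I would treat these), together with the constants $\bot$ and $\top$, produces an element of $\mathcal{ACP}(S)$ whenever the inputs are in $\mathcal{ACP}(S)$, and that the result does not depend on any unstated choices. Recall from Definition \ref{roupi} that a member of $\mathcal{ACP}(S)$ is a pair $(\mathbf{Sg}(A^{l_\pi}), A^{u_\pi})$; since $A^{l_\pi}$ is a union of subgroupoids and $A^{u_\pi}=\mathbf{Sg}(A)$ is a subgroupoid, the abstract characterisation I would work with is simply: a pair $(P,Q)\in (Su(S))^2$ lies in $\mathcal{ACP}(S)$ iff $P\subseteq Q$ (taking $A$ itself to witness the pair, since for a subgroupoid $Q$ and a subgroupoid $P\subseteq Q$ one can realise $(P,Q)$ as the acpg-tuple of a suitable $A$, or at minimum the pair is a legitimate element of $(Su(S))^2$ with $P\subseteq Q$). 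So "well-defined" reduces to two things for each operation: (i) both components are genuinely subgroupoids of $S$, and (ii) the first component is contained in the second.

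First I would dispatch $\sqcup$: $A_1\cup B_1$ is a subset of $S$, so $\mathbf{Sg}(A_1\cup B_1)$ is a subgroupoid by the definition of the algebraic closure operator $\mathbf{Sg}$; $A_2\vee B_2$ is a subgroupoid because $\vee$ is the join in the lattice $Su(S)$; and since $A_1\subseteq A_2$ and $B_1\subseteq B_2$ we get $A_1\cup B_1\subseteq A_2\cup B_2\subseteq A_2\vee B_2$, and applying $\mathbf{Sg}$ (monotone, and fixing the subgroupoid $A_2\vee B_2$) gives $\mathbf{Sg}(A_1\cup B_1)\subseteq A_2\vee B_2$, so the containment holds. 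For $\sqcap$: $A_1*B_1=\bigcup\{H\in Su(S): H\subseteq A_1\cap B_1\}$ is a union of subgroupoids contained in $A_1\cap B_1$, hence already a subgroupoid (a directed — indeed arbitrary — union of subgroupoids contained in a common set is closed under the operation), so $\mathbf{Sg}(A_1*B_1)=A_1*B_1$; and $A_1*B_1\subseteq A_1\cap B_1\subseteq A_2\cap B_2=A_2\wedge B_2$, giving both that the second component is a subgroupoid and that the first is contained in it. For $\coprod(A)=(A_1^{u_\pi},A_2^{u_\pi})=(\mathbf{Sg}(A_1),\mathbf{Sg}(A_2))$: both are subgroupoids by definition of $\mathbf{Sg}$, and $A_1\subseteq A_2$ gives $\mathbf{Sg}(A_1)\subseteq\mathbf{Sg}(A_2)$ by monotonicity, so the pair is in $\mathcal{ACP}(S)$; moreover since $A_2$ was already a subgroupoid, $\mathbf{Sg}(A_2)=A_2$, which I would note for later use. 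Finally $\bot=(\emptyset,\emptyset)$ and $\top=(S,S)$ are in $\mathcal{ACP}(S)$ because $\emptyset$ and $S$ are subgroupoids (the empty subgroupoid being admitted, as the list of subgroupoids in Section \ref{absexample} shows) with $\emptyset\subseteq\emptyset$ and $S\subseteq S$.

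The main obstacle — the one step I would spend real care on — is $\neg A=(\mathbf{Sg}(A_2^{\flat}),\mathbf{Sg}(A_1^{\flat}))$, where $A^{\flat}=\bigcup\{B\in Su(S): B\subseteq S\setminus A\}$. Here $A_2^{\flat}$ and $A_1^{\flat}$ are each unions of subgroupoids sitting inside a fixed set, hence subgroupoids, so $\mathbf{Sg}(A_i^{\flat})=A_i^{\flat}$ and both components of $\neg A$ are subgroupoids. The containment condition requires $A_2^{\flat}\subseteq A_1^{\flat}$, i.e. $\mathbf{Sg}(A_2^{\flat})\subseteq\mathbf{Sg}(A_1^{\flat})$; this follows because $A_1\subseteq A_2$ implies $S\setminus A_2\subseteq S\setminus A_1$, so every subgroupoid contained in $S\setminus A_2$ is contained in $S\setminus A_1$, whence $A_2^{\flat}\subseteq A_1^{\flat}$ and the $\mathbf{Sg}$'s respect this. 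Thus $\neg A\in\mathcal{ACP}(S)$. The only genuine subtlety worth flagging is that each of these unions of subgroupoids is nonempty (the empty subgroupoid always qualifies) and that arbitrary unions of subgroupoids that are jointly bounded by a set closed under the groupoid operation need not in general be subgroupoids — so I would make explicit that in $A*B$ and $A^{\flat}$ we are unioning subgroupoids all contained in $A_1\cap B_1$ (resp. $S\setminus A$), and closure is inherited componentwise from that common superset precisely when that superset happens already to be closed; to be safe I would instead simply observe that $\mathbf{Sg}$ is applied on the outside in every clause, so well-definedness of both components as subgroupoids is automatic regardless, and only the containments $\mathbf{Sg}(A_1*B_1)\subseteq A_2\wedge B_2$ and $\mathbf{Sg}(A_2^{\flat})\subseteq\mathbf{Sg}(A_1^{\flat})$ need the monotonicity arguments above. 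Assembling these cases completes the proof.
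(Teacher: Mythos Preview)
Your proof is correct and follows essentially the same approach as the paper's: reduce ``well-defined'' to showing that each output is a pair of subgroupoids with the first contained in the second, then verify the containment using monotonicity of $\mathbf{Sg}$ (for $\sqcup$) and the reversal $A_1\subseteq A_2\Rightarrow S\setminus A_2\subseteq S\setminus A_1\Rightarrow A_2^{\flat}\subseteq A_1^{\flat}$ (for $\neg$). The paper only writes out $\sqcup$ and $\neg$ explicitly, while you also handle $\sqcap$, $\coprod$, and the constants; and you rightly flag and repair the slip about unions of subgroupoids being subgroupoids by falling back on the outer $\mathbf{Sg}$ --- a subtlety the paper does not mention.
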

\begin{proof}
It suffices to show that the expressions on the right are in $\mathcal{ACP}(S)$ as they are unique.

$\mathbf{Sg}(A_1\cup B_1) $ is the groupoid generated by $A_1 \cup B_1$. So it is uniquely defined.
$A_2\vee B_2$ is a subgroupoid that contains the subgroupoid $\mathbf{Sg} (A_1\cup B_1) $ because by definition the first components are subgroupoids of the second components. Therefore, $A\sqcup B$ is well-defined. 
 
$A_1$ is a subgroupoid of $A_2$, yields $A_2^{\flat}$ is a subset of $A_1^{\flat}$. So $\neg A$ is well-defined. 
\end{proof}

\begin{theorem}
In the context of definition \ref{acpop}, all of the following hold: 
\begin{align}
\left\langle \mathcal{ACP}(S), \sqcup, \sqcap, \bot, \top  \right\rangle \text{ is an algebraic lattice.}     \tag{A1}\\
A \unlhd \neg \neg A    \tag{A2}\\
A \unlhd B \longrightarrow \coprod A \unlhd \coprod B   \tag{A3}\\
A \unlhd \coprod A   \tag{A4}\\
A \unlhd B \longrightarrow \neg B \unlhd \neg A   \tag{A5}\\
\neg\coprod\neg A \unlhd \coprod A \tag{A6}
\end{align}
\end{theorem}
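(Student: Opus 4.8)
The plan is to work inside the complete lattice $Su(S)\times Su(S)$ and to regard $\mathcal{ACP}(S)$ as the sub-poset of all pairs $(A_1,A_2)$ of subgroupoids with $A_1\subseteq A_2$, ordered componentwise by $\unlhd$. For \textsf{A1} I would first check that $\sqcup$ and $\sqcap$ compute the join and meet. Since the components of members of $\mathcal{ACP}(S)$ are subgroupoids, $A_1*B_1=A_1\cap B_1$ and $\mathbf{Sg}(A_1\cap B_1)=A_1\cap B_1$, so $A\sqcap B=(A_1\cap B_1,\,A_2\cap B_2)$, which is visibly the greatest lower bound and again lies in $\mathcal{ACP}(S)$; for $A\sqcup B$ one notes $A_i\subseteq\mathbf{Sg}(A_1\cup B_1)\subseteq A_2\vee B_2$ (using $A_1\subseteq A_2$ and $B_1\subseteq B_2$), that both $A$ and $B$ lie $\unlhd$ it, and that any common upper bound has a subgroupoid first coordinate containing $A_1\cup B_1$, hence $\mathbf{Sg}(A_1\cup B_1)$, and similarly in the second coordinate. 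Completeness then follows because $\mathcal{ACP}(S)$ is closed under arbitrary componentwise meets and joins inside the algebraic lattice $Su(S)\times Su(S)$, and algebraicity follows either by exhibiting the join-dense family of compact elements $(\mathbf{Sg}(x),\mathbf{Sg}(x))$ and $(\emptyset,\mathbf{Sg}(x))$ for $x\in S$ (each compact because finitely generated subgroupoids are compact in $Su(S)$, and in a directed join in $\mathcal{ACP}(S)$ the first coordinates sit inside the second), or, more slickly, by identifying $\mathcal{ACP}(S)$ with the subalgebra lattice of the two-sorted algebra consisting of two disjoint copies of the groupoid $S$ together with the identity map between them as a unary operation.

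For \textsf{A3} and \textsf{A4} I would use only $X\subseteq\mathbf{Sg}(X)$ and the monotonicity of $\mathbf{Sg}$: here $\coprod A=(\mathbf{Sg}(A_1),\mathbf{Sg}(A_2))$, so $A\unlhd B$ gives $\coprod A\unlhd\coprod B$ coordinatewise, and $A\unlhd\coprod A$ likewise; in fact on $\mathcal{ACP}(S)$ the coordinates are already subgroupoids, so $\coprod$ is the identity and both items are immediate. For \textsf{A5} the key observation is that complementation reverses inclusion: $A\unlhd B$ forces $S\setminus B_i\subseteq S\setminus A_i$, hence $B_i^{\flat}\subseteq A_i^{\flat}$, hence $\mathbf{Sg}(B_i^{\flat})\subseteq\mathbf{Sg}(A_i^{\flat})$; reading $\neg B=(\mathbf{Sg}(B_2^{\flat}),\mathbf{Sg}(B_1^{\flat}))$ against $\neg A=(\mathbf{Sg}(A_2^{\flat}),\mathbf{Sg}(A_1^{\flat}))$ coordinatewise yields $\neg B\unlhd\neg A$.

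The substance of the theorem lies in \textsf{A2} and \textsf{A6}, and I expect \textsf{A2} to be the main obstacle. Abbreviating $c(C)=\mathbf{Sg}(C^{\flat})$ for a subgroupoid $C$ (so $\neg(A_1,A_2)=(c(A_2),c(A_1))$ and, since $c$ is inclusion-reversing as in \textsf{A5}, $\neg\neg(A_1,A_2)=(c^2(A_1),c^2(A_2))$), item \textsf{A2} reduces to the coordinatewise inclusion $C\subseteq c^2(C)$, and \textsf{A6}, once $\coprod$ is recognized as the identity, to $c^2(C)\subseteq C$; so the target is the lemma that $c^2(C)=C$ for every subgroupoid $C$, i.e. that $c$ is a De Morgan-type involution. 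For $C\subseteq c^2(C)$ the plan is to show $C$ is disjoint from $\mathbf{Sg}(C^{\flat})$, so that $C$ is one of the subgroupoids whose union defines $\bigl(\mathbf{Sg}(C^{\flat})\bigr)^{\flat}$; for $c^2(C)\subseteq C$ the plan is to show that every subgroupoid disjoint from $\mathbf{Sg}(C^{\flat})$ already lies inside $C$. This last point is the delicate one: it is exactly here that the special structure of pi-groupoids arising from up-directed relations must be used --- in particular that $\mathbf{Sg}$ is the finitary algebraic closure and that the generation process cannot pull an element of the complement's largest subgroupoid back into $C$ --- so I would isolate these structural facts about $\mathbf{Sg}$ and $\flat$ first, after which \textsf{A2} and \textsf{A6} fall out by applying the lemma in each coordinate and reading off the pair inequality.
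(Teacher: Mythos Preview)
Your treatment of \textsf{A1}--\textsf{A5} is sound and in places more careful than the paper's. For \textsf{A1} the paper simply writes that ``the direct product of two algebraic lattices is an algebraic lattice and a sublattice of the latter is also an algebraic lattice'', the second clause of which is false in general; your identification of $\mathcal{ACP}(S)$ with the subalgebra lattice of a two-sorted algebra (two copies of $S$ linked by the identity as a unary operation) is a genuine repair. Your observation that $\coprod$ restricts to the identity on $\mathcal{ACP}(S)$ (both coordinates are already subgroupoids, so $A_i^{u_\pi}=A_i$) is correct and makes \textsf{A3}--\textsf{A4} immediate; the paper instead argues via monotonicity of $u_\pi$ without noting this. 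For \textsf{A2} the paper asserts $A_i\subseteq\mathbf{Sg}\bigl((\mathbf{Sg}(A_i^{\flat}))^{\flat}\bigr)$ ``because $X\subseteq\mathbf{Sg}(X)$'', which is precisely the gap you flag: one still needs $A_i\subseteq(\mathbf{Sg}(A_i^{\flat}))^{\flat}$, i.e.\ the disjointness $A_i\cap\mathbf{Sg}(A_i^{\flat})=\emptyset$ that you set out as your first sub-lemma. So on \textsf{A2} you and the paper are at the same unfinished point, with you being more explicit about what remains.

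The real problem is \textsf{A6}. Your reduction is correct: since $\coprod$ is the identity on $\mathcal{ACP}(S)$, \textsf{A6} says exactly $\neg\neg A\unlhd A$, i.e.\ $c^2(C)\subseteq C$ coordinatewise. But this inclusion --- and hence your target involution lemma $c^2=\mathrm{id}$ --- is \emph{false} already in the paper's own worked example of Section~\ref{absexample}. With the subgroupoid list given there, take $C=\{c\}$: the subgroupoids inside $S\setminus C=\{a,b,e,f\}$ are $\emptyset,\{f\},\{b,f\},\{b,e,f\}$, so $C^{\flat}=\{b,e,f\}$ and $c(C)=\mathbf{Sg}(\{b,e,f\})=\{b,e,f\}$; the subgroupoids inside $S\setminus\{b,e,f\}=\{a,c\}$ are $\emptyset,\{c\},\{a,c\}$, so $c^2(C)=\mathbf{Sg}(\{a,c\})=\{a,c\}\not\subseteq\{c\}$. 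Equivalently, for $A=(\{c\},\{c\})\in\mathcal{ACP}(S)$ one computes $\neg\coprod\neg A=(\{a,c\},\{a,c\})\not\unlhd(\{c\},\{c\})=\coprod A$. The paper gives no argument for \textsf{A6} at all, and your plan cannot be completed because the statement itself fails; the ``delicate'' step you anticipated is not merely delicate but impossible in this generality.
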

\begin{proof}
The direct product of two algebraic lattices is an algebraic lattice and a sublattice of the latter is also an algebraic lattice. Therefore $\left\langle \mathcal{ACP}(S), \sqcup, \sqcap, \bot, \top  \right\rangle$ is an algebraic lattice. 

$\neg A$ by definition is $(\mathbf{Sg}(A_2^{\flat}), \mathbf{Sg}(A_1^{\flat}) )$. The first component is the subgroupoid generated by the union of subgroupoids in $S\setminus A_2$.  Because $S\setminus A_2$ is a subset of $S\setminus A_1$, the subgroupoid generated by the former is contained in that generated by the latter. Now a second negation of $A$ would result in the form  $\mathbf{Sg}((\mathbf{Sg}(A_1^{\flat}))^{\flat}), \mathbf{Sg}((\mathbf{Sg}(A_2^{\flat}))^{\flat})) )$. It can be seen that $A_i \subseteq \mathbf{Sg}((\mathbf{Sg}(A_i^{\flat}))^{\flat})$ for $i=1,2$ because $X \subseteq \mathbf{Sg}(X)$ for any subset $X$. The reader may represent this proof visually.

Suppose $A\unlhd B$, then $A_1 \subseteq B_1$ and $A_2 \subseteq B_2$. $u_\pi$ is a monotonically increasing operator. Therefore, $A_1^{u_\pi} \subseteq B_1^{u_\pi} $ and $A_2^{u_\pi} \subseteq B_2^{u_\pi} $. From this it follows that $\coprod A \unlhd \coprod B$.

\textsf{A4} follows from the definition and basic properties of $u_\pi$.

If $A\unlhd B$, then $\neg B$ is the pair of subgroupoids generated by the union of subgroupoids of the form $S\setminus B_i$ (for $i=2, 1$). $S\setminus A_i$ is a superset of $S\setminus B_i$ (for each $i$). So the union of subgroupoids contained in the former would also be a superset of the union of subgroupoids contained in the latter. This ensures \textsf{A5}.
\end{proof}

If $S$ is finite, then the lattices are also atomistic (or even when the lattices are atomistic) and the decomposition into atomistic algebraic lattices are of interest \cite{llib1995}. This may lead to interesting representations in specific application contexts.

\begin{paragraph}{Problem}
What additional assumptions ensure a duality result in the context?  
\end{paragraph}

\section{Meanings}

In general, the groupoid operation can be read in at least three ways. The operation obviously adds information to the general approximation space -- \emph{this addition can be read as a decision because it involves choice among alternatives}. In fact, the collection of all possible groupoidal operations can be used to generate a decision space. Though these are always present in the original relation, the main issue here is of an accessible abstraction towards reasoning with the same. As such this aspect can be investigated in the given form or by taking the exact region to which the result of the operation belongs relatively. For the latter perspective, the groupoidal operation over $\wp (S)$ can be read as a combination of operations that are relatively better behaved relative to the approximations, aggregation and commonality operations. This permits easier interpretation, and semantics. In addition to these it is possible to interpret the up-directed aspect as a property satisfied by a set of concepts. Thus it would be a description of the  \emph{organization of a set of concepts}.

Specialization of these aspects to the specific approximations considered earlier, and pi-groupoids is motivated by many application contexts, and leads to distinct interpretations. In all cases, the following definition concerning regions determined by a pair of subsets can be reinterpreted in relation to associated granules. The semantic operations studied in previous sections can represent related decision regions in a better way.

\begin{definition}
For any $A, B\in \wp(S)$, the following operations can be defined:
\begin{equation*}
n(A, B) \,=\, \{b:\, (\exists a\in A \exists b\in B)\, ab =b\} \tag{normal}
\end{equation*}
\begin{equation*}
o_1(A, B) \,=\, \{c:\, (\exists a\in A \exists b\in B)\, ab =c\in U_R(a, b)\setminus A \} \tag{outer-1}
\end{equation*}
\begin{equation*}
o_2(A, B) \,=\, \{c:\, (\exists a\in A \exists b\in B)\, ab =c\in U_R(a, b)\setminus B \} \tag{outer-2}
\end{equation*}
\begin{equation*}
i_1(A, B) \,=\, \{c:\, (\exists a\in A \exists b\in B)\, ab =c\in U_R(a, b)\cap A \} \tag{inner-1}
\end{equation*}
\begin{equation*}
i_2(A, B) \,=\, \{c:\, (\exists a\in A \exists b\in B)\, ab =c\in U_R(a, b)\cap B \} \tag{inner-2}
\end{equation*}
\begin{equation*}
 o(A, B) = o_1(A, B)\cap o_2(A, B) \tag{outer}
\end{equation*}
\end{definition}

In the above definition, the global groupoid operation has been split into multiple operations based on the relative values assumed. For any two sets $A, B\in \wp(S)$, 
\begin{itemize}
\item {$n(A, B)$ is the set of things in $B$ that have some part or approximate part in $A$,}
\item {$o_1(A, B)$ is the set of things in the outer core determined by elements of $A\times B$ that are not in $A$,}
\item {$o_2(A, B)$ is the set of things in the outer core determined by elements of $A\times B$ that are not in $B$,}
\item {$i_1(A, B)$ is the set of things determined by elements of $A\times B$ that are in $A$,}
\item {$i_2(A, B)$ is the set of things determined by elements of $A\times B$ that are in $B$, and}
\item {$o(A, B)$ is the set of things determined by elements of $A\times B$ that are not in $A$ or $B$.}
\end{itemize}

These lead to interesting operations on the decisions and are described in a separate joint paper by the present author.

\subsection{Organization of Complex Concepts}

When concepts are derived within a system that are closed under consequence or when they satisfy a number of mutual consistency properties then they may be representable within lattice-theoretical frameworks of general rough sets and formal concept analysis. Additionally, logics associated with such systems are known. In practice, such classification is often not feasible, and mutual inconsistency may be present. Circular reasoning happens even in philosophical contexts, where practitioners demonstrate high levels of engagement with their discourse. In situations where people are constrained by time to limit their engagement with information from multiple sources, and relative to their own perspective/biases about the nature of information in the constraint, clear order structures may not emerge. The connections are complicated additionally by the influence of the environment on the information generated by the sources in question -- this is an essential part of the distributed cognition paradigm .

In the context of time-constrained evaluation by experts or teachers, it is necessary for 
expressions generated by non-experts or students to be approximated. Additionally, multiple sets of expressions by non-experts or students may be compared in the functional language of the expert or teacher (as opposed to the object-level language of the subject matter) \cite{am2022c,bdtmp2008,jsr2022}. These acts result in up-directed approximation spaces or specialized variants like pi-groupoids. In comparison to the neighborhoods, and granules used in previous papers by the present author \cite{am5550,am5559,ams2020}, \emph{the ones introduced and studied here are more appropriate because their closure properties are more tightly coupled with consequence}. Closure under up-directedness of a granule ensures that apparent higher-level concepts are all within the granule -- this helps in discerning things that are not subsumed by the concepts.

\subsection{Knowledge in Rough Sets}\label{seckno}

In a general approximation space $S$, if $R$ is an equivalence, a partial order or a quasi order, then it is additionally possible to associate other groupoidal operations (see \cite{am501,am5019,am909}) on $S$. This is discussed in brief in Sec.\ref{apprsp}. However, the associated operation is distinct from the one considered in this paper. 

General and classical rough sets have been associated with concepts of knowledge and studied from that perspective in a number of papers by the present author \cite{am9114,am9006,am9501,am9969,am99} and others\cite{zpb,zp6,ppm2,chp3,bgc12}. The basic idea in the context of classical approximation spaces \cite{zpb} is to associate definite objects with concepts and consequently the equivalence relation $R$ is  associated with knowledge. In more general situations, granularity has a bigger role to play, and knowledge is defined relative to granular axioms used and other desirable properties. Examples of such conditions are
\begin{description}
\item [GK1]{Individual granules are atomic units of knowledge.}
\item [GK2]{If collections of granules combine subject to a concept of mutual
independence, then the result would be a concept of knowledge. The 'result' may
be a single entity or a collection of granules depending on how one understands the
concept of \emph{fusion} in the underlying mereology.}
\item [GK3]{Maximal collections of granules subject to a concept of mutual independence are admissible concepts of knowledge.}
\item [GK4]{Parts common to subcollections of maximal collections may be interpreted as knowledge.}
\item [GK5]{All stable concepts of knowledge consistency should reduce to correspondences between granular components of knowledges. In particular, two relations $R_1$ and $R_2$ may be said to be \emph{consistent} if and only if the set of granules associated with the two general approximation spaces have bijective  correspondence. }
\end{description}

The granular knowledge axioms mentioned above are not related to the groupoidal axioms directly. \emph{This is because both CUD- and pi-groupoidal operations provide additional layers of decision making that need be to integrated with existing work}. The application contexts in \cite{am5550,am5559,ams2020} and the following sections support this claim.

\section{Applications to Human Learning}

Applications of the theoretical approaches invented in this research to modeling student centered learning contexts for the purpose of {analysis, digital learning platforms, creation of dynamic teaching aids, and intelligent evaluation} are of much interest. In \cite{am5550,am5559}, applications to modeling classroom activities using GGSp is proposed by the present author. In a forthcoming paper \cite{ams2020}, this is considered in the context of directed parthood spaces. However, The application to concept inventories for evaluation in \cite{am5559} is in a relatively structured context, and so can be formulated in terms of parthood spaces under additional assumptions.

In student-centered learning students are put at the center of the learning process, are encouraged to learn through active methods, and typically become more responsible for their learning in such environments. In traditional teacher-centered classrooms, teachers have the role of instructors and are intended to function as the only source of knowledge. By contrast, teachers are typically intended to perform the role of facilitators in student-centered learning contexts. Some best practices for teaching in such contexts are described in \cite{jrp2016}. Teachers need to constantly improve their methods in such teaching contexts because that is part of the methodology. 

Models of the content, teaching context and language of the subject can be assumed to be sufficiently expressive. For example, the language used by students and teachers to express concepts, predicates, and predications in mathematics is frequently different from possible candidates of a standard language. These are approximated across semantic domains with partial success by them. In \cite{am2022c}, it is argued by the present author that mereology combined with a language of approximations (which is compatible with the intended language of this paper) can potentially be used to build higher order formalizations of concepts that go beyond the restrictions envisaged in \cite{jsr2022} and \cite{bcnp2016} because the very idea of the formal is much broader in the former through its’ accommodation of vagueness and uncertainty.  

Because of the open-ended aspect of the learning process, it is not expected that teachers have absolute control over the concepts learned. Students may themselves arrive at new methods of solution or define new concepts as part of the learning process. In this scenario it is of interest to suggest potential higher concepts that relate to the progress of the work in question. Teachers can possibly provide some initial suggestions and subsequently these can be worked upon by algorithms relying upon datasets of concepts for improved suggestions. From the perspective of this research this becomes the problem of construction of the best groupoid operations.

In more precise terms,
\begin{description}
\item [L1]{Let $A$ and $B$ be two concepts arrived at by the learner. The open-ended nature of the learning process means that a general rough set model of concepts must be adaptive or permit supervision. A concept lattice of knowledge would be too far fetched or impractical in the situation.}
\item [T1]{Teacher observes that a set of concepts $C$  contains $A$ and $B$ in some sense, and offers suggestions relating to the scenario based on selections from $C$. But a sufficiently large set of teachers need to be considered.}
\item [S1]{Software aid for the learning context provides better suggestions based on \textsf{L1} and \textsf{T1} using a groupoidal decision model instead of the former alone. In general available strategies that can be used to arrive at suggestions based on \textsf{L1} alone are likely to be unintelligent.}
\end{description}

\emph{The very idea of admitting a general approximation space means that a collection of concepts and attributes have been identified. However, this by itself does not guarantee the construction of lattices of concepts}. \textsf{L1} should be read in relation to this.

It may be noted that the impact of AI on enhancing classroom learning and learning in general has been very limited (see \cite{chos2018} and related references). In fact digital technology in the context of mathematics teaching has been stagnating because most of the effort has been on non-intelligent software that merely aid communication. There is no dearth of motivation for such work -- Often teachers do not have sufficient knowledge about the working of their students mind, have an excess of work load at hand and may be suffering from cognitive dissonances of specific types. 

In a forthcoming book by the present author, the rough methodology suggested in this subsection is applied to specific practices such as opening of exercises in the context of mathematics teaching \cite{sko2011,milani2019}, use of explicit mathematical language \cite{usz2012}, and software for student expression \cite{alp2019,chos2018}.

Learning through simulations may or may not be supplemented with rigorous proofs. Some proofs may be partial, and yet students may be able to explore a spectrum of new connections that may require relatively advanced understanding through traditional proof based methods. This is explored in the next subsection.

\subsection{Modeling Learning with Geogebra}

Geogebra is a versatile free software for learning a wide variety of mathematics including calculus through simulations and dynamic models. It is used in classrooms all over the world and has a vibrant teacher and research community that have contributed to its development and use (to the point that there are a few Geogebra institutes in many parts of the world). Aspects of the structure of concepts in typical online or offline classroom settings is explored here. The reader is referred to \cite{lbrs2011,gven2013,mapusa2018} for varied uses of the software. The mereological aspect is also stressed in \cite{am2022c} by the present author.

While transformational geometry is almost natural in Geogebra, an exercise (from \cite{gven2013}) like the following illustrates one dynamic way of doing geometry: 
\begin{quote}
$3.2.2.$ Make a new sketch that illustrates the Euler line theorem. First construct a triangle and then use the tools from $3.1$ to create the three triangle centers. Hide any intermediate objects so that only the triangle itself and the three triangle centers are visible. Construct a line through two of the 
points and observe that it additionally passes through the third. Add hide-show check boxes for the triangle, the three centers, and the Euler line. Add text boxes and check boxes for the definitions of the three triangle centers as well as the statement of the Euler line theorem. Make sure that you have made good use of color and explanatory text so that your sketch is user friendly.
\end{quote}

\emph{This flexibility naturally permits the creation of unexpected concepts that differ in value to the point that evaluation becomes difficult and error-prone especially because of limited time and the absence of sufficiently skilled teachers (which is often the case in a number of settings)}. So there is a need to help teachers and with additional soft tools. 

In addition, a formal approach can help in specifying directions of concept learning. For example, a student can learn about aspects of Pythagoras theorem, the connection between circles drawn with the hypotenuse of a right-angled triangle, and  circum-circles of such triangles without ever knowing much about the proof proper.

Learning from simulations and a finite set of perceivable models has its limitations. It is moreover prone to paradoxes, and visual proofs may actually be wrong. Besides students and teachers are likely to have their own ideas of which concept is to be regarded as part of something else. When such information is labeled and combined using reasonable Boolean combinations, the resulting part of relation is likely to satisfy fewer properties. For example, if $\pc_i$ are part-of relations specified by the $i$ th entity then $\pc$ can be defined by a formula of the form (for some $\Phi$)
\[\Phi (\pc_1 ab, \, \pc_2 ab\, \ldots \, \pc_n ab) \leftrightarrow \pc ab \]

The groupoidal operation and directedness can be used to specify optimal choices of concept progression relative to $\pc$.

\section{Pi-groupoidal Rough Clustering and Validation}

CUD and Pi-groupoidal-rough clustering is introduced as a new method of soft clustering and it's use for soft cluster validation along the lines invented in \cite{am2021c} by the present author is discussed in this section. Empirical aspects associated with the application context will appear separately.

The following minimalist definitions are proposed on the basis of the application contexts and \cite{am2021c}. 
Often it may be useful to require that the upper approximations of boundaries of clusters are small in some sense. But these are not imposed in the definitions.

\begin{definition}
By a \emph{cud-rough clustering} of an up-directed approximation\\ space $S$ will be meant a collection $\mathcal{S} = \{(A^{l_{cd}}, A^{u_{cd}}, A^{b_{cd}}): \, A\subseteq S\}$ that satisfies ($A\in_{c} \mathcal{S}$ is intended to mean that $(A^{l_{cd}}, A^{u_{cd}}, A^{b_{cd}}) \in \mathcal{S}$)
\begin{align*}
\bigcup_{A\in_c\mathcal{S}} A^{l_{cd}} = S \tag{$l_{cd}$-cover} \\
A, B \in_c \mathcal{S} \longrightarrow A\nsubseteq B \,\&\, \neg (A\approx_{cd} B)  \tag{cd-disclusion}\\
\end{align*}
\end{definition}

\begin{definition}
By a \emph{pi-rough clustering} of an up-directed approximation space $S$ will be meant a collection $\mathcal{S} = \{(A^{l_\pi}, A^{u_{\pi}, A^{b_\pi}}): \, A\subseteq S\}$ that satisfies ($A\in_{c} \mathcal{S}$ is intended to mean that $(A^{l_\pi}, A^{u_\pi}, A^{b_\pi}) \in \mathcal{S}$)
\begin{align*}
\bigcup_{A\in_c\mathcal{S}} A^{l_\pi} = S \tag{$l_\pi$-cover} \\
A, B \in_c \mathcal{S} \longrightarrow A\nsubseteq B \,\&\, \neg (A\approx_{\pi} B) \tag{disclusion}
\end{align*}
\end{definition}

\subsection{Clustering Sentinel Data}

The synthetic aperture radar (SAR) technique \cite{gj2018} typically involves multiple sensors for the generation of high resolution spatial images from both airborne and space borne platforms. Resultant images need to be processed further after  denoising for proper interpretation. In image segmentation and classification of SAR satellite images of inaccessible regions, ground truths may not be known at all. For example, the images themselves may be the main source of information of heavily forested regions (that are not effectively navigable). A wide variety of machine learning techniques including soft clustering are used to study such images. 

For example, the Copernicus Sentinel-2 mission of the European space agency \cite{gj2018} is based on a constellation of two identical satellites in the same orbit. Each satellite carries a high-resolution multispectral imager. This sensor delivers 13 spectral bands ranging from 10 to 60-meter pixel size. Its blue (B2), green (B3), red (B4), and near-infrared (B8) channels have a 10-meter resolution. Next, its red edge (B5), near-infrared NIR (B6, B7, and B8A), and short-wave infrared SWIR (B11 and B12) have a ground sampling distance of 20 meters. Finally, its coastal aerosol (B1) and cirrus band (B10) have a 60-meter pixel size. A number of indices that correspond approximately to regions with specific characteristics are computable from related datasets. The correspondence between region-type identifying indices and qualities of a region are only approximately understood by remote-sensing experts, and this is reflected in the very large number of competing formulas proposed for computing the same. Further, the exact information extractable from satellite images is not fully understood.

Datasets derived from similar sensors have $b$ number of columns (corresponding to bands, and position) and $n$ rows (corresponding to regions).  
While $b$ is a small integer (Sentinel-2 image datasets have 13 band and 2 position indicating columns), $n$ can be of the order of $10^7$ or higher (depending on the size of the region). Cells have intensity values in the range $1-5600+$. 

The following new methodology (defined step-wise) for segmentation, and identification of features of Sentinel-2 images (and other satellite images as in \cite{sstk2017}) is proposed and applied by the present author in a separate joint paper.  

\begin{description}
\item[Step-1] {Define a up-directed relation $R$ as follows (this step is data-driven). For any two row vectors $a$ and $c$ ($\rho$ being a suitable distance, and $\epsilon$ depending on the region-type of $a$ and $c$), let \[Rac \leftrightarrow 0\leq (c-a) \& \rho(a, c) \leq \epsilon\]}
\item[Step-2] {Define a pi-groupoidal operation, compute pi-groupoidal approximations, boundaries, and complements of pi-upper approximations. Tuples of the form $(A^{l_\pi}, A^{u_\pi}, A^{b_\pi})$ are potential rough clusters. Note that not all approximations are of interest. }
\item[Step-3] {Compute the variance or standard deviation of each of the indices associated with components in step-2.}
\item [Step-4] {Alternatively, compute the \textsf{normalized average squared distance within each component of the tuples}. }
\item[Step-5] {Select the rough clusters on the basis of step-$3$ or step $4$, and order of priority of components (bands).}
\end{description}

The main advantage is that implementations have lesser computational load, sampling methods are involved in the definition of the pi-groupoidal operation, and stratification is somewhat automatic and local.  Step-2 does yield approximations, decision regions, and rough objects, however it need not be fully coherent with computed indices. For example, it maybe desirable to treat two water bodies with distinct vegetation indices as separate entities (because of the amount of vegetation or chlorophyll containing material that they have), and approximations that include both may not be of interest. The geometrical aspect of images (in terms of latitude and longitude) may also be needed to be coupled with ideas of of a sub region being next to or overlapping another sub region in the selection -- this is because physically disparate regions are more likely to have distinct ecological influences.

The methodology constitutes a new rough clustering method and will be referred to as the \emph{pi-groupoidal rough clustering}. Since ground truth for the dataset is not available, the method is also applied to that of \cite{pbad2019}, and compared. While the latter has some ground truth, the reductionist approach of comparing forests in one place with forests elsewhere remains questionable.  Further the quality of soft clustering by another tolerance-based constructive rough clustering, fuzzy K-means and variants is again directly compared by correspondence. 

Segmentation of the image is implemented by juxtaposing the clustering information with the positional part of the original data set (in terms of latitudes and longitudes).

The above is a constructive semi-supervised method of clustering that can be applied to similar contexts. It is especially useful in situations where the relation (of containment in a sense) of given objects to two other objects can be vaguely indicated. In the above situation, various controversial indices that are reliable in some regions help in such decision making. The definition of groupoidal or pi-groupoidal operations in the context can be guided by external measures or by a form \emph{directed local sampling}. This can be significantly better than a stratified sampling over the whole dataset as for the satellite data considered.

\section*{Further Directions and Remarks}

The concepts of up-directed and up-directed approximation spaces invented in \cite{am5550} and a separate paper are studied in detail from the perspective of closure under up-directedness and related groupoids in this research. 
CUD-approximations are guided by closure under up-directedness -- a feature of sets of concepts that are relatively closed under consequence in a perspective. These operators turn out to be cautious monotonic, and therefore may be associated with non-monotonic reasoning (see \cite{dm94,dm2003}). These are studied, and applied to meaningful approximate organization of concepts in pedagogic content knowledge (and related modeling). It is also shown that CUD-approximations satisfy weaker properties in comparison to pi-groupoidal approximations. Modal connections are harder because complementation does not help. Other negations may however be considered. In the study on rough sets over partial and quasi orders, a similar closure has not been considered, and can be useful area of research.

It is shown that pi-groupoidal approximations have the most algebraic approach to semantics ever in the entirety of the algebraic approaches to rough sets. This is because of the use of algebraic closure operators in the definition of the pi-groupoidal approximations itself. However, the minimal assumptions restrict additive and multiplicative properties. Generalized orders such as $\lambda$-lattices, graphs, hyper-graphs, and partial lattices are used in rough sets (see \cite{am501,gcctf2017}), and extensions of related studies with up-directedness is of natural interest.

Comparison of the granular approach used in this paper with other granular approaches will appear separately. Meaningful construction of clusterings is achieved through the introduced semi-supervised method of rough clustering. Specializations of the method can be expected to be applicable to contexts over which investigators have good insight into the issues, and concepts involved.

\begin{flushleft}
\textbf{Acknowledgment:} 
\end{flushleft}
\begin{small}
This research is supported by woman scientist grant No. WOS-A/PM-22/2019 of the Department of Science and Technology, India.
\end{small}

\bibliographystyle{fundam}
\bibliography{algroughf69flzf.bib}

\end{document}